\documentclass{article}



\usepackage[final]{neurips_2025}

\usepackage{amsmath}  
\usepackage{amssymb}
\usepackage{mathtools}
\usepackage{algorithm}
\usepackage{algpseudocode}
\usepackage{amsthm}
\usepackage{bbm}
\usepackage{subcaption} 
\usepackage{comment}
\usepackage{cancel}
\usepackage{xcolor}

\addtocontents{toc}{\protect\setcounter{tocdepth}{0}}

\theoremstyle{plain}
\newtheorem{theorem}{Theorem}[section]
\newtheorem{proposition}[theorem]{Proposition}
\newtheorem{lemma}[theorem]{Lemma}
\newtheorem{corollary}[theorem]{Corollary}
\theoremstyle{definition}
\newtheorem{definition}[theorem]{Definition}
\newtheorem{assumption}[theorem]{Assumption}
\theoremstyle{remark}




\usepackage[utf8]{inputenc} 
\usepackage[T1]{fontenc}    
\usepackage{hyperref}       
\usepackage{url}            
\usepackage{booktabs}       
\usepackage{amsfonts}       
\usepackage{nicefrac}       
\usepackage{microtype}      
\usepackage{xcolor}         

\title{Near-Optimal Experiment Design in Linear non-Gaussian Cyclic Models}

%

\author{
  Ehsan Sharifian\\
  EPFL, Lausanne, Switzerland\\ 
  \texttt{ehsan.sharifian@epfl.ch} \\
  \And
  Saber Salehkaleybar \\
  Leiden University, Leiden, The Netherlands\\
  \texttt{s.salehkaleybar@liacs.leidenuniv.nl} \\
  \AND
  Negar Kiyavash \\
  EPFL, Lausanne, Switzerland\\
  \texttt{negar.kiyavash@epfl.ch} \\
}

\newcommand{\bx}{\mathbf{x}}
\newcommand{\be}{\mathbf{e}}
\newcommand{\bq}{\mathbf{q}}
\newcommand{\bp}{\mathbf{p}}
\newcommand{\balpha}{\boldsymbol{\alpha}}

\newcommand{\ce}{\mathcal{E}}
\newcommand{\cs}{\mathcal{S}}
\newcommand{\cp}{\mathcal{P}}

\newcommand{\bbp}{P}
\newcommand{\R}{\mathbb{R}}
\newcommand{\doo}{\mathrm{do}}
\newcommand{\dom}{\mathrm{dom}}
\newcommand{\supp}{\mathrm{supp}}
\newcommand{\diag}{\mathrm{diag}}

\begin{document}

\maketitle

\begin{abstract}
We study the problem of causal structure learning from a combination of observational and interventional data generated by a linear non-Gaussian structural equation model that might contain cycles. Recent results show that using mere observational data identifies the causal graph only up to a permutation-equivalence class. We obtain a combinatorial characterization of this class by showing that each graph in an equivalence class corresponds to a perfect matching in a bipartite graph. This bipartite representation allows us to analyze how interventions modify or constrain the matchings. Specifically, we show that each atomic intervention reveals one edge of the true matching and eliminates all incompatible causal graphs. Consequently, we formalize the optimal experiment design task as an adaptive stochastic optimization problem over the set of equivalence classes with a natural reward function that quantifies how many graphs are eliminated from the equivalence class by an intervention. We show that this reward function is adaptive submodular and provide a greedy policy with a provable near-optimal performance guarantee. A key technical challenge is to efficiently estimate the reward function without having to explicitly enumerate all the graphs in the equivalence class. We propose a sampling-based estimator using random matchings and analyze its bias and concentration behavior. Our simulation results show that performing a small number of interventions guided by our stochastic optimization framework recovers the true underlying causal structure.
\end{abstract}

\section{Introduction}
Learning causal relationships among variables of interest in a complex system is a central goal in empirical sciences, forming the foundation for prediction,  intervention, and explanation~\citep{pearl2009causality}. These relationships are typically represented by causal graphs in which an edge from variable 
$X$ to variable $Y$ indicates that $X$ is a direct cause of $Y$. Much of the existing literature on causal structure learning assumes that the underlying causal graph is a Directed Acyclic Graph (DAG). However, many natural and engineered systems include feedback mechanisms that give rise to cycles in their causal representations. Such cyclic structures emerge in equilibrium models, low-frequency temporal sampling of dynamical systems, and various biological networks~\citep{bongers2021foundations}.

In acyclic settings, a variety of methods (such as those based on conditional independence tests) allow us to recover the skeleton and orientations of the causal graph from observational data. These methods often fail when applied to graphs with cycles. More specifically, observational data alone does not even suffice for learning the skeleton, let alone orienting the edges of the graph \cite{mokhtarian2023unified}.

Interventions, i.e., actively perturbing the system and observing the resulting distributional changes can allow us to learn the graphs even with cycles. The \emph{experiment design} problem studies how to best design interventions in order to maximize the information gained about the causal structure. Again, unlike the case of DAGs, where a body of work on experiment design exist, the work in the cyclic setting \cite{mokhtarian2023unified} is few and far in between. 
This is in part due to the fact that, unlike in DAGs, where an intervention on a subset of the vertices orients all the edges incident to them, in cyclic
directed graphs, performing experiments in some cases, neither leads to learning the presence of edges nor orienting them \citep{mokhtarian2023unified}.

In this work, we study the problem of causal structure learning from a combination of observational and interventional data in systems governed by linear non-Gaussian structural causal models (SCMs) that may contain \textit{cycles}. Our main contributions are as follows: 

\begin{itemize}
    \item We establish that, under linear non-Gaussian assumptions, the causal graph can be identified from observational data only up to an equivalence class. We further provide a combinatorial characterization of the equivalence class,  where each graph corresponds to a perfect matching in a bipartite graph (Section \ref{sec:matching}). Additionally, we show that the condensation graph, or Strongly Connected Components (SCCs), of the true causal graph can also be identified (\ref{sec: SCC and condensation graph}).

    \item The bipartite representation allows us to analyze how atomic interventions constrain the space of causal graphs. Specifically, we  show that each intervention reveals one edge of the true matching and eliminates all incompatible graphs from the equivalence class (Section \ref{sec: Interventional Distribution Information}).
    Therefore, we can formulate the optimal experiment design problem as an adaptive stochastic optimization over the space of equivalence classes, using a reward function that quantifies the number of eliminated graphs following an intervention. We prove that this reward function is adaptive submodular and hence a greedy policy has  provably near-optimal performance guarantees for intervention design (Section \ref{sec: Experiment Design for Causal Structure Learning}).
    \item To address the computational challenge of reward estimation, we propose a sampling-based estimator based on random matchings and provide a theoretical analysis of its bias and concentration properties (Section \ref{sec: Estimating the Reward Function by Sampling}).
    \item Experiments show that our adaptive strategy outperforms other heuristic methods and closely matches the feedback vertex set (FVS) lower bound (see more details in Section~\ref{sec:strategy-comparison}).

\end{itemize}

\section{Related Work}
Causal discovery is concerned with learning the underlying causal graph which encodes both the existence and direction of edges among variables of interest in a system. From observational data alone, the causal graph can be recovered only up to its Markov equivalence class (MEC). \cite{richardson2013polynomial} provided necessary and sufficient conditions for Markov equivalence of directed graphs (DGs) and proposed an algorithm for structure learning up to the MEC \cite{richardson2013discovery}. \cite{mooij2020constraint} extended the applicability of a classic algorithms for constraint-based causal
discovery (i.e., FCI \cite{spirtes2000causation}) to cyclic DGs and showed that it can recover the structure up to the MEC in this more general setting.

To address latent confounding and nonlinear mechanisms, \cite{forre2018constraint} introduced $\sigma$-connection graphs ($\sigma$-CGs)—a flexible class of mixed graphs—and developed a discovery algorithm that handles both latent variables and interventional data. \cite{ghassami2020characterizing} focused on distributional equivalence for linear Gaussian models, providing necessary and sufficient conditions and proposing a score-based learning method. In the context of linear non-Gaussian models, \cite{lacerda2008discovering} extended the Independent Component Analysis (ICA)-based approach of \cite{shimizu2006linear} to cyclic graphs to learn the equivalence class from the observational data.

The research on experiment design in acyclic models has focused on various aspects, including cost minimization and  efficient learning under budget constraints.
 \cite{eberhardt2005number} provided worst-case bounds on the number of experiments to identify the graph. \cite{he2008active} introduced adaptive and exact non-adaptive algorithms for singleton interventions. \cite{hauser2014two} proposed optimal one-shot and adaptive heuristics to minimize undirected edges. \cite{shanmugam2015learning} offered lower bounds for experiment design using separating systems, and \cite{kocaoglu2017experimental} introduced a stage-wise approach in the presence of latent variables.
In cost-aware settings, \cite{kocaoglu2017cost} developed an optimal algorithm for variable-cost interventions without size constraints. \cite{greenewald2019sample} and \cite{squires2020active} designed approximation algorithms for trees and general graphs, respectively.

Fixed-budget design was initiated by \cite{ghassami2018budgeted} with a greedy Monte Carlo-based approach. \cite{ghassami2019counting} improved MEC sampling efficiency, and \cite{ghassami2019interventional} developed an exact experiment design algorithm for tree structures. \cite{ahmaditeshnizi2020lazyiter} proposed an exact method to enumerate DAGs post-intervention. \cite{wienobst2021polynomial,wienobst2023polynomial} showed that MEC counting and sampling can be done in polynomial time.
Bayesian methods include the adaptive submodular algorithm \cite{agrawal2019abcd} and its extension \cite{tigas2022interventions} to optimize both intervention targets and their assigned values.

To the best of our knowledge, only a handful of work on experiment design in cyclic graphs exists. For general SCMs, \cite{mokhtarian2023unified} proposed an experiment design approach that
can learn both cyclic and acyclic graphs. They provided a lower bound on the number of experiments required
to guarantee the unique identification of the causal graph in the worst case, showing that
the proposed approach is order-optimal in terms of the number of experiments up to an additive logarithmic term.
Our approach differs  from \cite{mokhtarian2023unified} in three key aspects. First, their framework is non-adaptive, that is, all interventions are selected in advance based solely on observational data. In contrast, our method is adaptive as each experiment is selected based on the outcomes of prior interventions. Second, \cite{mokhtarian2023unified} allow intervening on multiple variables in a single experiment. Although they also considered the setup where the number of interventions per experiment is bounded, this bound should be greater than the size of the largest strongly connected component in the graph minus one. By contrast, our framework limits each experiment to a single-variable intervention, making it more practical in settings where fine-grained or limited interventions are preferred. Third, the approach in  \cite{mokhtarian2023unified} is designed for general SCMs and relies on conditional independence testing to recover the causal structure. In our work, we focus specifically on linear non-Gaussian models, which enables us to use results from ICA to infer the underlying graph.

\section{Preliminaries and Problem Formulation}
\label{sec: Problem Formulation}
Consider a \emph{Structural Causal Model} (SCM) \(\mathfrak{C} := (\mathbf{S}, P_{\be})\), where \(\mathbf{S}\) consists of \(n\)  structural equations and \(P_{\be}\) is the joint distribution over exogenous noise terms \citep{peters2017elements}. In a linear SCM (LSCM), the structural equations take the following form:
\begin{equation}
    \label{eq:01}
    \bx = W\bx + \be,
\end{equation}
where \(\bx \in \mathcal{X} \subseteq \mathbb{R}^n\) denotes the vector of observable variables, \(\be \in \mathcal{E} \subseteq \mathbb{R}^n\) represents the vector of independent exogenous noises, and \(W \in \mathbb{R}^{n \times n}\) is the matrix of linear coefficients.
The directed graph \(G = (V, E)\) induced by the SCM captures the causal structure: node \(i\) corresponds to the variable \(x_i\), and for any nonzero coefficient \(W_{ij} \neq 0\), a directed edge \((j, i) \in E\) is drawn. The binary adjacency matrix \(B_G \in \{0,1\}^{n \times n}\) is defined as \([B_G]_{ij} = \mathbbm{1}_{\{W_{ij} \neq 0\}}\), and the “free parameters” are the nonzero entries \(\mathrm{supp}(W) = \{(i, j) : W_{ij} \neq 0\}\). We impose the following assumptions:

\begin{assumption}
\label{ass: no self-loop}
The model has no self-loops: \(W_{ii} = 0\) for all \(i \in [n]\). \\
\textit{Remark.} This assumption can be made without loss of generality, as any self-loop can be algebraically removed by reparametrizing the structural equations. For a detailed explanation, see~\cite{lacerda2012discovering}.
\end{assumption}

\begin{assumption}
\label{ass: invertible}
The matrix \(I-W \) is invertible.
\end{assumption}

\begin{assumption}
\label{ass: non-gaussian}
The components of \(\be\) are jointly independent, and at most one component is Gaussian. That is,
\[
P_{\be} \in \mathcal{P}(\mathcal{E}) := \left\{P_{\be} : P_{\be} = \prod_{i=1}^n P_{e_i}, \text{ with at most one } P_{e_i} \text{ Gaussian} \right\}.
\]
\end{assumption}

Under these assumptions, the model admits a unique solution:
\(\bx = (I - W)^{-1} \be = A \be\). 
The observational distribution \(P_{\bx}\) is thus given by the push-forward measure
\(P_{\bx} = (I - T_W)^{-1}_{\#}(P_{\be}),\)
where \(T_W\) denotes the linear map associated with \(W\).

We consider \emph{perfect interventions}, in which the causal mechanism for a variable \(x_i\) is replaced with an exogenous noise \(\tilde{e}_i\), removing all incoming edges to \(x_i\). The interventional SCM is given by:
\begin{equation}
    \label{eq:02}
    \bx = W^{(i)} \bx + \be^{(i)},
\end{equation}
where the \(i\)-th row of \(W^{(i)}\) is zeroed out and the noise in the  \(i\)-th row of $\be$ is replaced with the new independent noise \(\tilde{e}_i\). We denote the new noise vector by  \(\be^{(i)}\). While \((I - W^{(i)})\) might not always be invertible, we assume that the probability it becomes singular is zero under mild randomness conditions on the rows of \(W\). Thus, we treat \((I - W^{(i)})\) as invertible so that the interventional distribution 
\(P(\bx | \doo(x_i)) = (I - T_{W^{(i)}})^{-1}_{\#}(P_{\be^{(i)}})\) is well defined.

In interventional structure learning, we perform \(K\) distinct experiments to reduce uncertainty over the underlying causal graph. Each experiment involves a \emph{perfect intervention} on a single variable. We denote the set of intervention targets by \(\mathcal{I} = \{i_1, i_2, \dots, i_K\} \subseteq [n] \), where \( i_j \in [n] \) indicates that the \(j\)-th experiment intervenes on variable \( x_{i_j} \). In the single-intervention setting we consider, each experiment replaces the structural assignment of the target variable with a new exogenous noise term (cf.\ Equation~\ref{eq:02}). As a result, data from the \(j\)-th experiment is drawn from the interventional distribution \( P(\bx | \doo(x_{i_j})) \). 

Each such experiment provides partial information about the true structural matrix \(W\) by revealing the causal parents of the intervened variable. The overall objective is to combine observational data with strategically designed interventions to eliminate incorrect graph candidates and recover the true causal structure with minimal experimentation.

Before formulating an optimization strategy for experiment design, we first investigate the extent to which the observational distribution constrains the underlying causal graph and matrix \(W\). This characterizes the \emph{observational equivalence class} of graphs, that is, the set of causal structures that cannot be distinguished based on observational data alone.

\section{Distribution Equivalent Graphs}
\label{sec: Distribution Equivalent Graphs}

We aim to characterize the class of linear structural causal models (LSCMs) and their associated graphs that generate the same distribution over observable variables. There are two related but distinct notions in the literature:

\begin{itemize}
    \item \textbf{Distribution-entailment equivalence}, which considers whether two parameterized LSCMs yield the same observational distribution.
    \item \textbf{Distribution equivalence}, which concerns whether two graph structures admit the same set of observable distributions across all compatible LSCMs.
\end{itemize}

These concepts are formally defined and compared in Appendix~\ref{app:dist-equivalence}. In our setting—linear SCMs with non-Gaussian noise—the two notions coincide due to identifiability guarantees from ICA. For a detailed discussion, see also Lacerda et al.~\cite{lacerda2012discovering}.

\vspace{0.5em}
\noindent\textit{In the remainder of this section, we focus on a graph operation that connects distribution-equivalent graphs.}

\begin{definition}[Cycle Reversion]
    \label{def:03}
    Let \(C\) be a cycle in the graph \(G\). The \textbf{cycle reversion (CR)} operation involves swapping the rows of each member of \(C\) in \(I + B_{G}\) with the row corresponding to its subsequent node in the cycle. This operation reverses the direction of the cycle \(C\). Moreover, any edge from a node outside \(C\) that was originally a parent of some \(X \in C\) will now point to the predecessor of \(X\) in the original cycle \(C\) (see Figure~\ref{fig:cycle-reversion}).
\end{definition}

Propositions~\ref{prop:1} and~\ref{prop:2} (see Appendix~\ref{app:dist-equivalence}) shows that distribution-equivalent graphs can be transformed into one another through a sequence of cycle reversions. 

\subsection{Strongly Connected Components}
\label{sec: SCC and condensation graph}
In this section, we examine an important property that remains invariant under cycle reversion. To establish this, we first define strongly connected components (SCCs) in directed graphs.

\begin{definition}[Strongly Connected Component (SCC)]\label{def: SCC}
Two vertices \( u \) and \( v \) in a directed graph \( G \) are said to be strongly connected if there are directed paths from \( u \) to \( v \) and from \( v \) to \( u \). This defines an equivalence relation on the vertex set, whose equivalence classes are called \emph{strongly connected components} (SCCs). Each SCC is a maximal subgraph in which every pair of vertices is strongly connected, and no additional vertex from \( G \) can be added without violating this property. The collection of SCCs forms a partition of the vertex set of \( G \). See~\cite [Section 22.5]{cormen2022introduction}
\end{definition}

\begin{definition}[Condensation Graph]\label{def: condensation graph}
    The \textit{condensation graph} of a directed graph \( G \) is obtained by contracting each SCC into a single vertex and mapping all edges between SCCs to a single edge. The resulting graph is a directed acyclic graph (DAG).
\end{definition}

\begin{figure}[t]
    \centering
    \begin{subfigure}{0.55\linewidth}
        \centering
        \includegraphics[width=\linewidth]{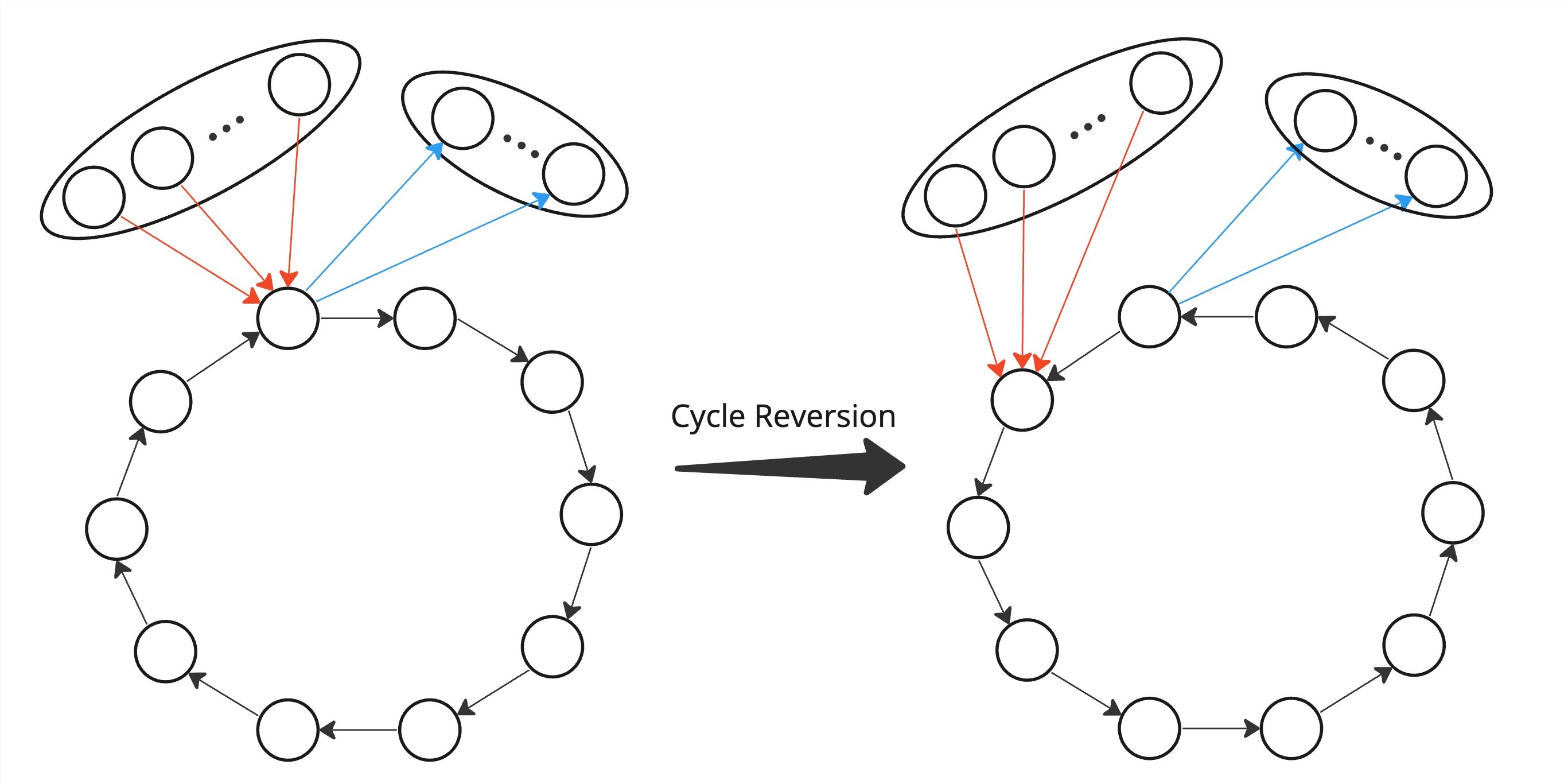}
        \caption{Cycle reversion operation}
        \label{fig:cycle-reversion}
    \end{subfigure}
    \hfill
    \begin{subfigure}{0.33\linewidth}
        \centering
        \includegraphics[width=\linewidth]{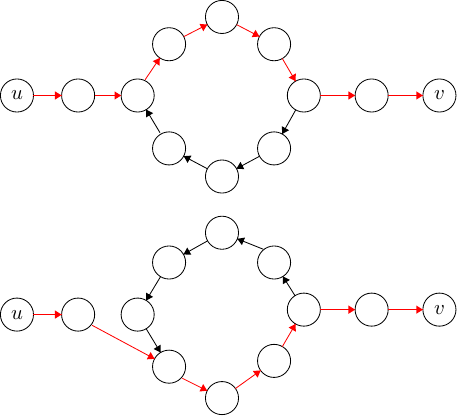}
        \caption{SCCs are preserved after CR}
        \label{fig:scc-maintained}
    \end{subfigure}
    \caption{(a) Graph representation of cycle reversion. (b) The figure highlights representative nodes from an SCC to show that their reachability is maintained, although the full SCC is not depicted.}
    \label{fig:cr-and-scc}
\end{figure}

\begin{theorem}\label{thm: SCC}\footnote{All proofs are provided in the appendix.}
    Strongly connected components remain unchanged in distribution-equivalent graphs.
\end{theorem}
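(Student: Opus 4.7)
The plan is to reduce the theorem to invariance of the SCC decomposition under a single cycle reversion (CR), and then prove this invariance by a reachability-preserving construction. By Propositions~\ref{prop:1} and~\ref{prop:2} in the appendix, any two distribution-equivalent graphs are related by a finite sequence of CRs on disjoint cycles; induction on the sequence length reduces the problem to a single CR. Since SCCs are determined by the reachability relation $a \leadsto b$ (Definition~\ref{def: SCC}), it suffices to show that a single CR preserves $\leadsto$ in both directions.

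Let $G' = \mathrm{CR}_C(G)$ for a cycle $C = v_1 \rightarrow \cdots \rightarrow v_k \rightarrow v_1$. I would first catalogue the edge changes induced by the row-swap underlying CR: (i) the cycle edges are reversed, so $v_j \rightarrow v_{j+1}$ becomes $v_{j+1} \rightarrow v_j$; (ii) every external incoming edge $u \rightarrow v_{j+1}$ with $u \notin V(C)$ is redirected to $u \rightarrow v_j$, per Definition~\ref{def:03}; (iii) external outgoing edges $v_j \rightarrow w$ with $w \notin V(C)$ remain unchanged, since the rows of external nodes in $I + B_G$ are untouched. The crucial structural fact is that $V(C)$ still carries a directed cycle (in the reverse direction) in $G'$, so any two vertices of $V(C)$ remain mutually reachable using only edges inside $V(C)$.

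With these edge rules, reachability invariance follows from an explicit path construction. Given a directed walk $\pi$ from $a$ to $b$ in $G$, decompose $\pi$ into maximal external segments (inside $V \setminus V(C)$) alternating with maximal internal segments (inside $V(C)$). External segments persist in $G'$ unchanged. Each internal segment starts with an entry edge $u \rightarrow v_{j+1}$ that becomes $u \rightarrow v_j$ in $G'$; from $v_j$ we navigate along the reversed cycle to the exit vertex $v_\ell$ used by $\pi$; the exit edge $v_\ell \rightarrow w$ is preserved. Concatenating the pieces yields a walk $a \leadsto b$ in $G'$. The converse follows because CR is self-inverse: applying CR to the reversed cycle in $G'$ recovers $G$, so the same construction produces $a \leadsto b$ in $G$ from any walk in $G'$.

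The main obstacle is the bookkeeping for walks that enter and exit $V(C)$ multiple times. This is resolved by the preservation of strong connectivity inside $V(C)$: each re-entry at some $v_{j'}$ can be rerouted to the next required exit independently of the other re-entries, so the construction composes cleanly over all segments of $\pi$ and the theorem follows.
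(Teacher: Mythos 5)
Your proposal is correct and follows essentially the same route as the paper's proof: the same catalogue of edge changes under a single cycle reversion (cycle edges reversed, external incoming edges shifted to the predecessor, outgoing edges preserved), the same segment-wise rerouting of walks through the reversed cycle, and the same use of the self-inverse property of CR to obtain the converse direction (which the paper phrases as ``SCCs do not expand'' via contradiction). Your packaging as full reachability invariance $a \leadsto b$ is a slightly cleaner statement of what the paper's construction already establishes, but the underlying argument is identical.
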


Figure~\ref{fig:scc-maintained} illustrates the core idea behind the proof by showing how cycle reversion operations preserve strongly connected components.
The following corollary is a direct consequence of the above theorem and the definition of the condensation graph in \ref{def: condensation graph}.

\begin{corollary}
    The condensation graphs of all distribution-equivalent graphs are identical.
\end{corollary}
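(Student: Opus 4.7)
The plan is to derive the corollary as a short consequence of Theorem~\ref{thm: SCC} together with a careful bookkeeping of how cycle reversion affects edges whose endpoints lie in different SCCs. Since any two distribution-equivalent graphs are connected by a finite sequence of cycle reversions (by Propositions~\ref{prop:1}--\ref{prop:2} in the appendix), it suffices to prove that a single cycle reversion leaves the condensation graph unchanged. By Theorem~\ref{thm: SCC}, the SCC partition is invariant, so the vertex sets of the two condensation graphs coincide; the only remaining task is to show equality of the edge sets.

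I would first recall that any directed cycle $C$ in $G$ lies entirely within a single SCC, call it $S_C$, since its vertices mutually reach each other along $C$. I would then inspect the three ways cycle reversion on $C$ can modify edges. Edges with both endpoints in $C$ are simply reversed and therefore remain inside $S_C$, so they contribute no inter-SCC edges in either graph. Edges from a node $u \notin C$ into a node $x \in C$ get redirected to the predecessor of $x$ along $C$; since that predecessor still lies in $S_C$, the new edge connects the same pair of SCCs $(S_u, S_C)$ as before. Edges originating from a node of $C$ and leaving $C$, as well as edges disjoint from $C$, are unaffected by the row-swap description in Definition~\ref{def:03}.

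Combining these observations, I would conclude that for every pair of distinct SCCs $(S, S')$ the existence of an edge from $S$ to $S'$ in $G$ is preserved exactly under a single cycle reversion. Iterating over the sequence of cycle reversions relating any two distribution-equivalent graphs $G_1$ and $G_2$, the two condensation graphs have identical vertex sets (by Theorem~\ref{thm: SCC}) and identical edge sets, so they coincide.

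The only step requiring genuine care is the second one: one has to be explicit about the fact that the redirected endpoint of an incoming edge, as well as the reversed endpoints of cycle edges, all remain in $S_C$. This is essentially the same argument illustrated in Figure~\ref{fig:scc-maintained}, so no new difficulty arises beyond Theorem~\ref{thm: SCC}. Hence the proof is genuinely a one-paragraph corollary, with the bulk of the content already encapsulated in the SCC-preservation theorem.
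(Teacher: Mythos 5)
Your proof is correct and takes essentially the same route as the paper, which presents the corollary as a direct consequence of Theorem~\ref{thm: SCC} and Definition~\ref{def: condensation graph} without further argument. Your explicit bookkeeping that redirected and reversed edges under cycle reversion keep connecting the same pair of SCCs (so the condensation edge sets, not just the vertex sets, agree) simply spells out the step the paper leaves implicit.
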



Additionally, note that for a subset of vertices in the condensation graph that correspond to single vertices in the original graph, their corresponding row in the recovered matrix via \textit{ICA} can be identified. This is because cycles only occur within an SCC, and if an SCC consists of a single vertex, no cycle passes through it, eliminating any \textit{CR} ambiguity. Therefore, the corresponding row can be determined. Furthermore, if the original graph is acyclic, all its SCCs contain only one vertex, meaning that row permutation ambiguity is entirely resolved. As a result, the original data-generating graph can be inferred purely from observational data, a result previously established in Linear Non-Gaussian Acyclic Models (LiNGAM) in~\cite{shimizu2006linear}.

\subsection{Matching in Bipartite Graphs}
\label{sec:matching}

Graphs in a distribution-entailment equivalence class correspond to perfect matchings in a bipartite graph. Given \( I - W_{\mathrm{ICA}} \), recovered via ICA from observational data, any equivalent graph can be written as \( P_\pi D (I - W_{\mathrm{ICA}}) \), where \( P_\pi \) is a row permutation matrix such that its diagonal entries remain nonzero.

Define a bipartite graph \( G_b = (\{r_1, \dots, r_n\}, [n], E) \), where each node \( r_i \) represents the \( i \)-th row of the matrix \( I - W_{\mathrm{ICA}} \), and an edge \( (r_i, j) \in E \) exists if and only if the $j$-th entry of row $r_i$ is non-zero. Each valid permutation \( \pi \) defines a perfect matching \( r_i \mapsto \pi(i) \), and vice versa. Thus, distribution-equivalent graphs correspond to row permutations consistent with matchings in \( G_b \) (see an example in Appendix~\ref{app:matching-example}).

This bipartite view provides a combinatorial handle on equivalence classes and forms the foundation for analyzing how interventions restrict the matching space.

\section{Interventional Distribution}
\label{sec: Interventional Distribution Information}

We now investigate how interventional data, when combined with observational data, provides additional knowledge about the causal structure. In particular, we show that an intervention on a single variable enables the identification of its causal parents.

\begin{proposition}
\label{prop: intervention information}
Given the observational distribution generated by the model in ~\eqref{eq:01} and the interventional distribution from model~\eqref{eq:02}, the \( i \)-th row of the matrix \( W \) can be recovered. Consequently, the parents of \( x_i \) and their causal effects are identifiable.
\end{proposition}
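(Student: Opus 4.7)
The plan is to apply ICA separately to the observational and interventional distributions and then cross-reference the two row sets to isolate the $i$-th row of $I-W$. First, by Assumption~\ref{ass: non-gaussian} and the ICA identifiability theorem applied to $\bx = (I-W)^{-1}\be$, we recover a matrix $M^{\mathrm{obs}} = D\,\Pi\,(I-W)$ with unknown diagonal $D$ and permutation $\Pi$. Applying ICA to the interventional distribution $\bx = (I-W^{(i)})^{-1}\be^{(i)}$ yields analogously $M^{\mathrm{int}} = D'\,\Pi'\,(I-W^{(i)})$.

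The key structural observation is that the intervention zeros out only the $i$-th row of $W$, so $(I-W^{(i)})_j = (I-W)_j$ for every $j\neq i$, while $(I-W^{(i)})_i = e_i^\top$. Hence the unordered multiset of rows (up to nonzero scalar multiples) of $I-W^{(i)}$ and $I-W$ agree on $n-1$ elements, and differ in exactly one slot: $I-W$ contributes $(I-W)_i$, whereas $I-W^{(i)}$ contributes the standard basis vector $e_i^\top$. Since $I-W$ is invertible (Assumption~\ref{ass: invertible}), its rows are linearly independent, so no two rows are scalar multiples of one another; the same holds for $I-W^{(i)}$ (by the mild genericity noted after Equation~\eqref{eq:02}). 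Consequently, grouping rows of $M^{\mathrm{obs}}$ and $M^{\mathrm{int}}$ under the equivalence "scalar multiple of'' yields a unique matching with $n-1$ paired equivalence classes and a single unmatched row on each side, the unmatched row in $M^{\mathrm{obs}}$ being a nonzero scalar multiple of $(I-W)_i$.

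To remove the scaling ambiguity, recall that $(I-W)_{ii}=1$ by Assumption~\ref{ass: no self-loop}. Dividing the identified row by its $i$-th coordinate therefore recovers $(I-W)_i$ exactly, and then $W_i = e_i^\top - (I-W)_i$ gives the entire $i$-th row of $W$. Its support identifies the parents of $x_i$ and its nonzero entries give the associated linear causal effects.

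The main obstacle is establishing that the row-matching step is unambiguous; this is cleanly resolved by invertibility of $I-W$, which rules out proportional rows. A secondary subtlety is verifying that the unmatched row on the observational side is indeed $(I-W)_i$ rather than some row of index $j\neq i$: this follows because the unmatched row on the interventional side is $e_i^\top$, whose support is the singleton $\{i\}$, and it is not proportional to any $(I-W)_j$ with $j\neq i$ (such a row would have $(I-W)_{jj}=1$ at coordinate $j\neq i$). Hence the matching correctly pairs indices $j\neq i$ across the two outputs and leaves precisely the $i$-th rows as the distinguishing elements.
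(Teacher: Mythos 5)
Your proposal is correct and follows essentially the same route as the paper's proof: apply ICA to both distributions, observe that $I-W$ and $I-W^{(i)}$ share $n-1$ rows up to permutation and scaling, and use linear independence of the rows (invertibility) to isolate the unique unmatched observational row as the $i$-th row. You in fact fill in two details the paper leaves implicit—fixing the scale via $(I-W)_{ii}=1$ from Assumption~\ref{ass: no self-loop}, and verifying that the unmatched row must carry index $i$ because $e_i^\top$ is not proportional to any $(I-W)_j$ with $j\neq i$—which strengthens rather than changes the argument.
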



\subsection{Recovering an Edge of the True Matching in the Bipartite Graph}

Proposition~\ref{prop: intervention information} has a natural interpretation in the bipartite graph of \ref{sec:matching}. Recall that each perfect matching in the bipartite graph \( G_b = (\{r_1, \dots, r_n\}, [n], E) \), with edges \( (r_i, j) \in E \) iff the $j$-th entry of row $r_i$ is non-zero, corresponds to a valid permutation matrix \( P_\pi \) defining a distribution-equivalent graph.

Intervening on variable \( x_i \), identifies the row corresponding to \( x_i \). In the bipartite graph, this identifies the matching edge \( (r_{\pi^{-1}(i)}, i) \) corresponding to the true permutation. This eliminates ambiguity for row \( r_{\pi^{-1}(i)} \) and restricts the possible options for matching the remaining vertices in the bipartite graph (see an example in Appendix~\ref{app:intervention-matching-example}).

\section{Adaptive Experiment Design}
\label{sec: Experiment Design for Causal Structure Learning}
In previous sections, we showed that observational data alone allows us to recover the causal structure only up to an equivalence class of graphs, denoted by \( \mathcal{G}_{\text{eq}} \). Each member of this class corresponds to a different row permutation of the matrix \( I - W_{\mathrm{ICA}} \), which is itself associated with a valid matching in the bipartite graph introduced in Section~\ref{sec:matching}. 

To uniquely identify the causal structure, we need to perform interventions. As demonstrated in Proposition~\ref{prop: intervention information}, an intervention on variable \( x_i \) reveals the row corresponding to \( x_i \), thereby identifies the correct matching edge \( (r_{\pi^{-1}(i)}, i) \) in the bipartite graph. Each such intervention removes all graphs from \( \mathcal{G}_{\text{eq}} \) that are incompatible with the identified edge, shrinking the equivalence class.

We now define an \textbf{adaptive} experiment design framework tailored to this setting. Our objective is to strategically select a sequence of intervention targets to maximally reduce the size of the equivalence class \( \mathcal{G}_{\text{eq}} \), with the goal of identifying the true causal graph \( G^* \) using as few interventions as possible.

Let \( \mathcal{I} = \{i_1, i_2, \dots, i_K\} \subseteq [n] \) be the set of selected variables for intervention. For each \( i_j \in \mathcal{I} \), we collect the interventional data from the distribution \( \bx^{(i_j)} \sim P(\bx | \doo(x_{i_j})) \), and perform ICA to estimate the corresponding interventional matrix \( I - W^{(i_j)} \). By comparing this estimate with the observational estimate \(I-W_{\mathrm{ICA}} \), we recover the row of \( x_{i_j} \) and hence the corresponding matching edge. We then eliminate all graphs in \( \mathcal{G}_{\text{eq}} \) that are incompatible with this edge.

Let \( \Omega := \mathcal{G}_{\text{eq}} \) denote the initial equivalence class. For a given intervention set \( \mathcal{I} \subseteq [n] \) and a true graph \( \phi \in \Omega \), let \( \Omega^{(\mathcal{I}, \phi)} \subseteq \Omega \) denote the set of all graphs that are compatible with the result of interventions on variables in  \( \mathcal{I} \). The objective is to find the set $\mathcal{I}$ that minimizes the size of the reduced equivalence class \( |\Omega^{(\mathcal{I}, \phi)}| \).
In the absence of a prior on the true graph \( \phi \in \Omega \), we consider a uniform prior over \( \Omega \). This leads to an expected-size criterion for selecting the next intervention.

\subsection{Adaptive Stochastic Optimization}
\label{sec:adaptive-objective}

Our aim is to eliminate as many candidate graphs as possible \emph{adaptively}, by choosing each intervention based on the outcomes observed so far. Let the unknown true graph be a random variable drawn uniformly from the set of observationally equivalent graphs, i.e., \(
\Phi \sim \mathrm{Unif}(\Omega)
\).

Following the \emph{Adaptive Stochastic Optimization} framework~\citep{golovin2011adaptive}, we define:

\begin{itemize}
    \item A \emph{policy} \(\pi\) is an adaptive strategy that, at each step \(t\), selects an intervention
      $
        I_t = \pi\bigl((i_1,O_1),\dots,(i_{t-1},O_{t-1})\bigr),
      $
      based on the history of past interventions \(i_1,\dots,i_{t-1}\) and their observed outcomes \(O_1,\dots,O_{t-1}\). Here, each outcome \(O_s\) corresponds to the information gained from intervening on variable \(i_s\)—for example, the index \(\pi^{-1}(i_s)\) of the true row recovered via ICA, or equivalently, the bipartite matching edge \((r_{\pi^{-1}(i_s)},i_s)\) that was identified.
    
    \item For a given realization \( \phi \in \Omega \) and policy \(\pi\), let $
        \mathcal{I}(\pi,\phi) = \bigl\{\,i_1,i_2,\dots,i_K\bigr\}
      $
      be the (random) set of variables on which \(\pi\) intervenes before terminating (subject to a budget of \(K\) interventions).
    
    \item Define the \emph{reward function} as
      \begin{align}
        f(\mathcal{I},\phi) := |\Omega| - |\Omega^{(\mathcal{I},\phi)}|,
        \label{eq:reward_function}
      \end{align}
      where \( \mathcal{I} \subseteq [n] \) denotes the set of intervened variables, and \(\Omega^{(\mathcal{I},\phi)} \subseteq \Omega\) is the set of graphs that agree with \(\phi\) on the outcomes of all interventions in \(\mathcal{I}\).
\end{itemize}

The \emph{utility} of a policy \(\pi\) is the expected reward:
\begin{align}
F(\pi) = \mathbb{E}_{\Phi\sim\mathrm{Unif}(\Omega)} \left[ f\bigl(\mathcal{I}(\pi,\Phi), \Phi \bigr) \right].
\label{eq:objective function}
\end{align}

Our goal is to find an adaptive policy \(\pi\), subject to a budget of \(K\) interventions, that maximizes \(F(\pi)\).

\subsection{Adaptive Monotonicity and Submodularity}

We now review some relavent definitions from the adaptive submodularity framework~\citep{golovin2011adaptive}, and show that our reward function satisfies these properties.

\begin{definition}[Universe and Random Realization]
Let \(\Omega\) be the finite set of all possible true graphs (realizations) consistent with the observational data. We treat the true graph as a random variable \(\Phi \sim \mathrm{Unif}(\Omega)\).
\end{definition}

\begin{definition}[Partial Realization]
Let \(\mathcal{E} = \{1,2,\dots,n\}\) denote the set of variables eligible for intervention, which, in most cases, coincides with the full set of observed variables, and let \(O\) denote the set of possible outcomes from each intervention. A \emph{partial realization} \(\psi\) is a function
\[
\psi: \mathrm{dom}(\psi) \rightarrow O, \quad \mathrm{dom}(\psi) \subseteq \mathcal{E},
\]
where \( \psi(i) \in O \) records the observed outcome (e.g., recovered matching edge) for intervention \(i\). We write \( \Phi \sim \psi \) to denote the posterior distribution over realizations conditioned on consistency with all observations in \(\psi\), i.e.,
\[
\Pr[\Phi = \phi | \Phi \sim \psi] \propto
\begin{cases}
1 & \text{if } \phi \text{ agrees with } \psi, \\
0 & \text{otherwise}.
\end{cases}
\]
\end{definition}

\begin{definition}[Conditional Expected Marginal Benefit]
Let \(f:2^{\mathcal{E}} \times \Omega \to \mathbb{R}_{\ge 0}\) be a reward function and let \(\psi\) be a partial realization. For any \(v \notin \mathrm{dom}(\psi)\), define the \emph{conditional expected marginal benefit} as
\begin{align}
\Delta(v | \psi)
\;:=\;
\mathbb{E}\left[ f\bigl(\mathrm{dom}(\psi) \cup \{v\}, \Phi\bigr) - f\bigl(\mathrm{dom}(\psi), \Phi\bigr) \,|\, \Phi \sim \psi \right].
\label{eq:cemg}
\end{align}
\end{definition}

\begin{definition}[Adaptive Monotonicity]
A reward function \(f\) is \emph{adaptive monotone} if
\begin{align}
\Delta(v | \psi) \ge 0
\label{eq:adaptive_monotone}
\end{align}
for every partial realization \(\psi\) and item \(v \notin \mathrm{dom}(\psi)\).
\end{definition}

\begin{definition}[Adaptive Submodularity]
A reward function \(f\) is \emph{adaptive submodular} if, for all partial realizations \(\psi \subseteq \psi'\) and all \(v \notin \mathrm{dom}(\psi')\),
\begin{align}
\Delta(v | \psi) \ge \Delta(v | \psi').
\label{eq:adaptive_submodular}
\end{align}
\end{definition}

\begin{theorem}\label{thm:adaptive}
Let \( f(\mathcal{I}, \phi) \) be the number of graphs eliminated after performing interventions \( \mathcal{I} \) under realization \( \phi \), as defined in Equation~\eqref{eq:reward_function}. Then \( f \) is adaptive monotone and adaptive submodular with respect to the uniform prior distribution over realizations.

Let \( F(\pi) = \mathbb{E}_{\Phi\sim\mathrm{Unif}(\Omega)} \left[ f\bigl(\mathcal{I}(\pi,\Phi), \Phi \bigr) \right] \) denote the expected number of eliminated graphs under an adaptive policy \( \pi \), as defined in Equation~\eqref{eq:objective function}. Then the adaptive greedy policy achieves a \((1 - 1/e)\)-approximation to the optimal value of \( F \).
\end{theorem}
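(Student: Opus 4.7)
The plan is to reduce Theorem~\ref{thm:adaptive} to the cardinality-constrained adaptive stochastic maximization theorem of Golovin and Krause~\cite{golovin2011adaptive}. It suffices to verify that $f(I,\phi) = |\Omega| - |\Omega^{(I,\phi)}|$ is adaptive monotone and adaptive submodular with respect to the uniform prior $\Phi\sim\mathrm{Unif}(\Omega)$; given these two properties, the $(1-1/e)$-approximation for the adaptive greedy policy under a cardinality-$K$ budget follows directly from their main theorem.

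First, I would derive a clean closed-form for the conditional expected marginal benefit. Fix a partial realization $\psi$, write $\Omega_\psi$ for the set of realizations consistent with $\psi$, and let $N := |\Omega_\psi|$. For an unused target $v\notin\mathrm{dom}(\psi)$, Proposition~\ref{prop: intervention information} guarantees that intervening on $v$ yields a deterministic outcome $o_v(\phi)$ given the realization $\phi$, so we may partition $\Omega_\psi = \bigsqcup_{o}\Omega_\psi^{v=o}$ with $\Omega_\psi^{v=o} := \{\phi\in\Omega_\psi : o_v(\phi)=o\}$. Since $\Phi\sim\psi$ is uniform on $\Omega_\psi$, one has $\Pr[o_v(\Phi)=o\mid\Phi\sim\psi] = |\Omega_\psi^{v=o}|/N$, and noting $\Omega^{(\mathrm{dom}(\psi),\phi)}=\Omega_\psi$ and $\Omega^{(\mathrm{dom}(\psi)\cup\{v\},\phi)} = \Omega_\psi^{v=o_v(\phi)}$ for $\phi\sim\psi$, the definition~\eqref{eq:cemg} gives
\[
\Delta(v|\psi) \;=\; N \;-\; \frac{1}{N}\sum_{o}\bigl|\Omega_\psi^{v=o}\bigr|^{2}.
\]
Adaptive monotonicity is then immediate since $\sum_o N_o^2 \le (\sum_o N_o)^2 = N^2$.

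The main obstacle is adaptive submodularity. My plan is to recast the identity above as the statement that the function $g(n_1,\dots,n_k) := S - \tfrac{1}{S}\sum_i n_i^2$ with $S := \sum_i n_i$ is componentwise non-decreasing on the nonnegative integer lattice. A direct finite-difference computation yields
\[
g(n+e_j) - g(n) \;=\; \frac{S^{2} + \sum_i n_i^{2} - 2 n_j S}{S(S+1)} \;\ge\; \frac{(S - n_j)^{2}}{S(S+1)} \;\ge\; 0,
\]
using $\sum_i n_i^{2} \ge n_j^{2}$ and $S \ge n_j$. Any refinement $\psi\subseteq\psi'$ satisfies $\Omega_{\psi'}\subseteq\Omega_\psi$, so $|\Omega_{\psi'}^{v=o}|\le|\Omega_\psi^{v=o}|$ for every outcome $o$; applying the componentwise monotonicity of $g$ coordinate by coordinate gives $\Delta(v|\psi')\le\Delta(v|\psi)$, which is adaptive submodularity.

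With both properties established, the main theorem of~\cite{golovin2011adaptive} on cardinality-constrained adaptive stochastic maximization applies to the bounded, nonnegative reward $f$ under the uniform prior and yields $F(\pi_{\mathrm{greedy}})\ge(1-1/e)\,F(\pi^\star)$, where $\pi^\star$ is an optimal $K$-budget adaptive policy---exactly the claimed approximation guarantee. The only genuinely non-routine step is the finite-difference inequality in paragraph three, which crucially relies on the outcome $o_v(\phi)$ being a deterministic function of the realization (via Proposition~\ref{prop: intervention information}); the remaining steps are either direct calculations or a black-box appeal to the adaptive submodularity framework.
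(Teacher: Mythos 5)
Your proposal is correct and takes essentially the same route as the paper's own proof: both verify monotonicity trivially, derive $\Delta(v\mid\psi)=N-\tfrac{1}{N}\sum_o\bigl|\Omega_\psi^{v=o}\bigr|^2 = N\sum_i p_i(1-p_i)$, establish adaptive submodularity by a one-element lattice step whose finite difference has numerator $(S-n_j)^2+\sum_{i\neq j}n_i^2$ (your ``add one graph'' increment is the paper's ``remove one graph'' computation run in reverse), and then invoke Golovin--Krause for the $(1-1/e)$ guarantee. The only cosmetic differences are your Cauchy--Schwarz-style bound $\sum_o N_o^2\le N^2$ for monotonicity in place of the paper's set-inclusion argument, and that the paper additionally records an alternative $\alpha$-parameterization proof that you do not need.
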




\section{Estimating the Reward Function by Sampling}
\label{sec: Estimating the Reward Function by Sampling}

Based on Theorem~\ref{thm:adaptive}, 
at each step (after observing the results of previous interventions),
we greedily intervene on the variable that maximally reduces the size of the remaining equivalence class. Thus, it is sufficient to compute the conditional expected marginal benefit \(\Delta(v | \psi)\) for each candidate variable \(v \notin \dom(\psi)\).

Let \(\phi\) be any realization consistent with \(\psi\), and define \(N = |\Omega^{(\dom(\psi), \phi)}|\). In the bipartite-matching representation of the problem, the column vertex \(v\) in the bipartite graph may be matched to different rows across the candidate graphs in \(\Omega^{(\dom(\psi), \phi)}\). Concretely, recall that each graph in this equivalence class corresponds to a perfect matching in the bipartite graph \(G_b\), whose left nodes are rows \(r_1, \dots, r_n\), and whose right nodes are column indices \(1, \dots, n\). Before intervening on variable \(v\), the right-hand node \(v\) may be matched to different row nodes \(r_{z_1}, \dots, r_{z_k}\) across the \(N\) graphs. Denote these distinct row indices by \(z_1, z_2, \dots, z_k\). Therefore, the set of candidate edges for node \(v\) is
\[
\{\, (r_{z_i}, v) \; : \; i = 1, \dots, k \,\}.
\]
Each graph \(g \in \Omega^{(\dom(\psi), \phi)}\) selects exactly one of these edges in its matching. Let
\[
n_i = \bigl|\{\,g \in \Omega^{(\dom(\psi), \phi)} \; : \; g \text{ matches } v \text{ to } r_{z_i} \} \bigr|, \qquad \sum_{i=1}^k n_i = N.
\]

Therefore, prior to an intervention, the probability that \(v\) is matched to row \(r_{z_i}\) is \(p_i := n_i/N\). Once we intervene on \(v\) and recover its true row, we identify the correct edge \((r_{z_i}, v)\) and eliminate all graphs that do not contain this edge. If edge \((r_{z_i}, v)\) is observed, exactly \(n_i\) graphs survive, and the number of eliminated graphs is \(N - n_i\). Hence, the conditional expected marginal benefit under \(\psi\) is:
\begin{align}
\Delta(v | \psi)
&= \sum_{i=1}^k p_i (N - n_i)
= N \sum_{i=1}^k p_i (1 - p_i).
\label{eq:delta_recap}
\end{align}

To compute \(\Delta(v | \psi)\) via Equation~\eqref{eq:delta_recap}, we need to know the full equivalence class \(\Omega^{(\dom(\psi), \phi)}\) (This is necessary for evaluating \(n_i\) and \(N\)). Since explicitly enumerating all graphs in this class is computationally infeasible due to the exponential number of perfect matchings, we estimate \(\Delta(v | \psi)\) by sampling from \(\Omega^{(\dom(\psi), \phi)}\).

Let \(\bp = [p_1, \dots, p_k]^T\) and define the normalized marginal benefit as \(L(\bp) := \sum_{i=1}^k p_i(1 - p_i)\). Although \(\bp\) is not directly accessible, we assume a \textbf{sampling oracle} \(\mathcal{S}\) that returns i.i.d.\ samples from \(\bp\). This corresponds to sampling a perfect matching uniformly at random and observing the edge incident to \(v\). A single batch of such samples allows us to estimate \(L(\bp)\) for all intervention candidates.

The existence of a polynomial-time approximate uniform sampler is ensured by the FPRAS of Jerrum, Sinclair, and Vigoda~\citep{jerrum2004polynomial}. While their method offers theoretical guarantees, we also consider faster heuristic alternatives in practice. For the sake of analysis, it suffices to assume access to such a sampler.

Given samples \(X_1, \dots, X_M \sim \bp\), where each \(X_j \in [k]\) indicates the row matched to \(v\), we construct an empirical estimate \(\hat{\bp}\) and use it to approximate \(\Delta(v | \psi)\).

\subsection{Statistical Accuracy of Sampling-Based Estimation}
\label{sec:statistical-estimation}

We now quantify the accuracy of estimating the marginal benefit \( \Delta(v | \psi) = NL(\bp) = N\sum_{i=1}^k p_i(1 - p_i) \) via empirical sampling.

Let \( \bp = (p_1, \dots, p_k) \) be the unknown categorical distribution over candidate rows (i.e., the edges connected to \(v\) in the bipartite graph). Let \( X_1, \dots, X_M \overset{\text{iid}}{\sim} \bp \), and define the empirical distribution:
\begin{equation}
    \hat{\bp} = (\hat{p}_1, \dots, \hat{p}_k), \quad \hat{p}_i := \frac{1}{M} \sum_{j=1}^M \mathbf{1}\{ X_j = i \}.
    \label{eq:p_estimator}
\end{equation}

We use the estimator
\(
    L(\hat{\bp}) = \sum_{i=1}^k \hat{p}_i (1 - \hat{p}_i)
\). The following theorem provides an estimation error bound on $|L(\bp) - L(\hat{\bp})|$.

\begin{theorem}
\label{thm: bounding the estimation error}
Given $M$ i.i.d samples \( X_1, \dots, X_M \overset{\text{iid}}{\sim} \bp \), for the estimsator $L(\hat{\bp})$, we have:
\begin{align}
\label{estimation error}
\mathbb{P}\left[|L(\hat{\bp}) - L(\bp)| \geq \frac{1}{M} + \sqrt{\frac{2}{M} \log\left(\frac{2}{\epsilon}\right)} \right] \leq \epsilon.
\end{align}
In particular, picking \(\epsilon = \frac{1}{M}\) implies that with probability at least \(1 - \frac{1}{M}\), the error satisfies
\[
|L(\hat{\bp}) - L(\bp)| \leq \frac{1}{M} + \sqrt{\frac{2 \log(2M)}{M}} = \mathcal{O}\left(\sqrt{\frac{\log M}{M}}\right) = \widetilde{\mathcal{O}}\left(\sqrt{\frac{1}{M}}\right).
\]
\end{theorem}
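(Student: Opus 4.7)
The plan is to split the estimation error via the triangle inequality into a bias piece and a fluctuation piece,
\[
|L(\hat{\bp}) - L(\bp)| \;\leq\; |L(\bp) - \mathbb{E}[L(\hat{\bp})]| \;+\; |L(\hat{\bp}) - \mathbb{E}[L(\hat{\bp})]|,
\]
handle the two terms separately, and combine them before choosing $\epsilon$.

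For the bias, I would write $L(\hat{\bp}) = 1 - \sum_{i=1}^k \hat{p}_i^2$, use $M\hat{p}_i \sim \mathrm{Binomial}(M,p_i)$ to obtain $\mathbb{E}[\hat{p}_i^2] = p_i^2 + p_i(1-p_i)/M$, and sum to get the clean identity $\mathbb{E}[L(\hat{\bp})] = L(\bp)\,(1 - 1/M)$. Hence the bias equals $L(\bp)/M$, which is bounded by $1/M$ since $L(\bp) = 1 - \|\bp\|_2^2 \le 1$.

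For the concentration piece, I would apply McDiarmid's bounded-differences inequality to $L(X_1,\dots,X_M)$ viewed as a function of the sample indices. The crucial step is verifying that $L$ has the bounded-differences property with parameter $c_i = 2/M$: swapping the $i$-th sample from value $a$ to value $b$ alters only $\hat{p}_a \mapsto \hat{p}_a - 1/M$ and $\hat{p}_b \mapsto \hat{p}_b + 1/M$, and a direct expansion of $\sum_j \hat{p}_j^2$ gives a change of $2(\hat{p}_b-\hat{p}_a)/M + 2/M^2$; the physical constraints $\hat{p}_a \ge 1/M$ (since the original sample was $a$) and $\hat{p}_b \le 1-1/M$ (since sample $i$ is not $b$) then pin the absolute value of the change in $L$ to at most $2/M$. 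McDiarmid with $\sum_i c_i^2 = 4/M$ then delivers
\[
\mathbb{P}\!\left[|L(\hat{\bp}) - \mathbb{E}[L(\hat{\bp})]| \ge t\right] \le 2\exp(-Mt^2/2),
\]
and inverting this tail at level $\epsilon$ yields $t = \sqrt{(2/M)\log(2/\epsilon)}$.

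Adding the bias and fluctuation bounds gives \eqref{estimation error}; the second claim of the theorem is just the specialization $\epsilon = 1/M$, using the trivial observation $1/M = o(\sqrt{\log(2M)/M})$ to collapse everything into the stated $\widetilde{\mathcal{O}}(\sqrt{1/M})$ rate. The main obstacle I anticipate is nailing down the constant in $c_i = 2/M$ tightly: a naive triangle inequality on $2(\hat{p}_b - \hat{p}_a)/M + 2/M^2$ produces $2/M + 2/M^2$, which would propagate a slightly worse constant through McDiarmid, so the argument has to use the sign/range constraints on $\hat{p}_a$ and $\hat{p}_b$ to absorb the lower-order $2/M^2$ term rather than treating it with absolute values. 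Everything else is a routine second-moment computation plus a textbook invocation of McDiarmid.
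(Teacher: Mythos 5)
Your proposal is correct and follows essentially the same route as the paper: the same triangle-inequality split into a bias and a deviation term, the same exact second-moment computation giving \(\mathbb{E}[L(\hat{\bp})] = \left(1-\frac{1}{M}\right)L(\bp)\) and hence bias at most \(\frac{1}{M}\), and the same invocation of McDiarmid's bounded-differences inequality with \(c_i = \frac{2}{M}\), combined at \(t = \frac{1}{M} + \sqrt{\frac{2}{M}\log\left(\frac{2}{\epsilon}\right)}\). The only cosmetic difference is how the constant \(\frac{2}{M}\) is verified: you expand \(\sum_j \hat{p}_j^2\) directly and absorb the \(\frac{2}{M^2}\) term using the range constraints \(\hat{p}_a \ge \frac{1}{M}\), \(\hat{p}_b \le 1-\frac{1}{M}\), whereas the paper factors \(p(1-p)-q(1-q)=(p-q)\bigl(1-(p+q)\bigr)\) and bounds each of the two changed coordinates by \(\frac{1}{M}\) — both yield the same tight parameter.
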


\section{Experimental Results}
\label{sec:experiments}

We evaluate the performance of our adaptive experiment design method for identifying causal graphs under the linear non-Gaussian structural causal model assumption. Experiments are conducted on synthetic data generated from Erdős–Rényi random directed graphs, where each possible directed edge (excluding self-loops) is included independently with a fixed probability. To ensure identifiability, we enforce that the resulting matrix \(I-W \) is invertible. We measure the average number of interventions required for full causal identification.

\subsection{Performance with an Ideal ICA Oracle}
To evaluate the intervention strategy on its own, isolating it from statistical estimation errors, we assume access to an ideal ICA procedure that returns a row-permuted and scaled version of the true matrix \(I-W \). The adaptive method uses a bipartite representation of the graph and samples perfect matchings using two modes: \emph{exact}, where all matchings are enumerated, and \emph{sample}, where a fast greedy heuristic is used (\ref{app:implementation-details-idealized setting}). We compare our adaptive strategy against two baselines: \emph{Random}, which selects a target uniformly at random, and \emph{Max Degree}, which chooses the node with the highest degree in the bipartite graph.

As a theoretical baseline, we compute the \emph{feedback vertex set (FVS)} of each true graph. The FVS size represents a fundamental lower bound on the number of interventions required to break all cycles. Although computing the FVS is NP-hard, we solve it exactly to obtain the best possible benchmark.

Figures~\ref{fig:comparison-sample} and~\ref{fig:comparison-exact} show that our adaptive method consistently outperforms the baselines and operates remarkably close to the intractable FVS lower bound, despite having no knowledge of the true graph structure. This demonstrates the near-optimality of our greedy approach in an ideal setting. For more details, please refer to~\ref{app:implementation-details}.

\begin{figure}[t!]
    \centering
    \begin{subfigure}[t]{0.49\linewidth}
        \centering
        \includegraphics[width=\linewidth]{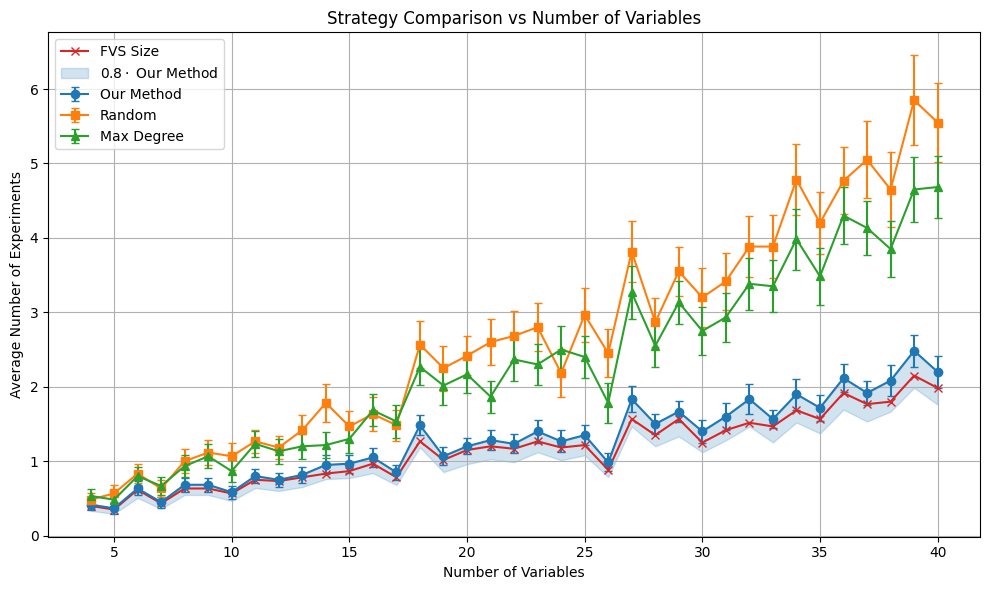}
        \caption{Sample mode: approximate matching sampler.}
        \label{fig:comparison-sample}
    \end{subfigure}
    \hfill
    \begin{subfigure}[t]{0.49\linewidth}
        \centering
        \includegraphics[width=\linewidth]{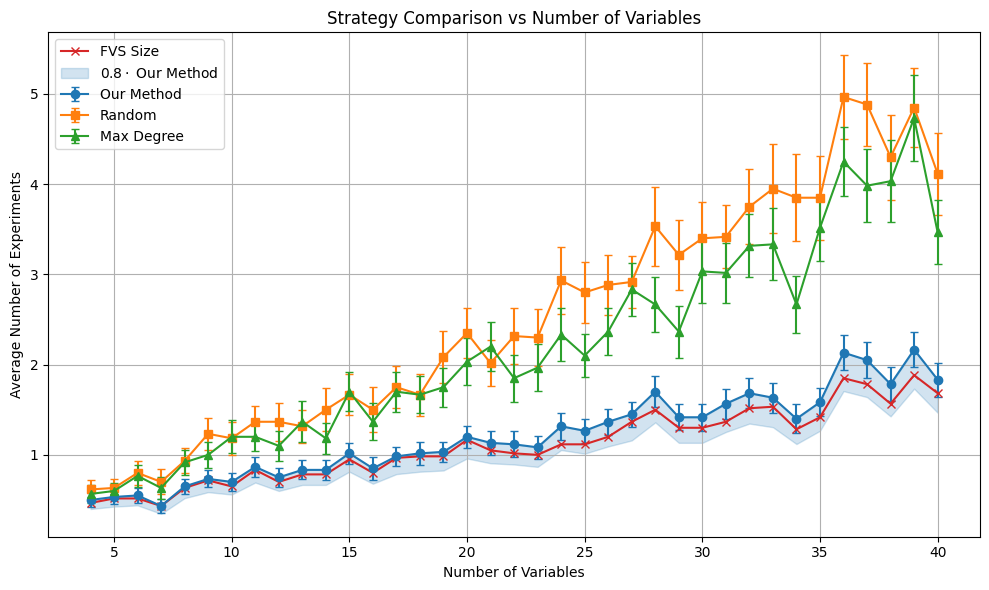}
        \caption{Exact mode: full enumeration of matchings.}
        \label{fig:comparison-exact}
    \end{subfigure}
    \caption{Comparison of intervention strategies assuming an ideal ICA oracle. Our method (Adaptive) consistently performs close to the feedback vertex set (FVS) lower bound.}
    \label{fig:strategy-comparison}
\end{figure}

\subsection{Robustness in a Practical Setting with Finite-Sample ICA}
To assess the practical viability of our method, we evaluate its performance in a more realistic setting without an ideal ICA oracle. Instead, we apply the \texttt{FastICA} algorithm~\cite{hyvarinen2000independent} to finite samples generated from both observational and interventional distributions. This introduces estimation noise, which can corrupt the recovered matrices. To handle these challenges, we introduce two key algorithmic modifications for robustness: \textit{adaptive thresholding} of matrix entries and a \textit{safe matching procedure} to prevent incorrect row assignments. A detailed description of these modifications is provided in Appendix~\ref{app:Implementation-with-Finite-Sample}.

In this setting, our adaptive strategy continues to exhibit a performance trend similar to the ideal case, consistently outperforming the Random and Max Degree baselines and remaining remarkably close to the FVS lower bound (see Figure~\ref{fig:strategy-comparison-FastICA} in Appendix~\ref{app:Implementation-with-Finite-Sample}), confirming its effectiveness even under estimation noise of ICA.

Furthermore, to analyze the quality of the recovered structure, we measured the relative error ($\varepsilon_{\mathrm{rel}}=\frac{\left\|\widehat{W}-W^*\right\|_F}{\left\|W^*\right\|_F}$) between the estimated matrix $\widehat{W}$ and the ground-truth $W^*$. Our result, presented in Appendix~\ref{app:Implementation-with-Finite-Sample}, shows that the algorithm achieves a high-fidelity recovery of the true causal structure in the vast majority of runs.

\section{Extensions}
\label{sec:extensions}

Our core framework focuses on adaptive, single-variable, perfect interventions in linear non-Gaussian models. Here, we briefly discuss how the proposed method can be extended to more general settings. We defer detailed derivations and proofs to Appendix~\ref{app:extensions}.

\subsection{Multi-Node Interventions}
The proposed method can be generalized to handle simultaneous interventions on a set of variables $E = \{i_1, \dots, i_t\}$. Such an intervention is always informative, as it localizes the row-permutation ambiguity to the intervened set $E$ and effectively disambiguates it from the rest of the graph. Furthermore, if the subgraph induced by the variables in $E$ is acyclic, the internal cycle reversion ambiguity within $E$ is fully resolved. This allows for the unique recovery of the true row for each variable in $E$, enabling the safe parallelization of experiments to accelerate the causal discovery process. We provide a detailed analysis in Appendix~\ref{app:extensions-multi-node}.

\subsection{Imperfect Interventions}
Our approach is also robust to imperfect interventions. The core requirement for identifying a variable's corresponding row is that the intervention sufficiently perturbs its causal mechanism. As long as the post-intervention row is distinguishable from the set of observational rows after ICA estimation, the intervention is informative. This condition holds for some realistic noisy or incomplete intervention types, enhancing the practical applicability of our method. Further details are discussed in Appendix~\ref{app:extensions-imperfect}.

\subsection{Generalization to Non-linear Models}
While we focused on the linear case, our experimental design strategy could be extended to non-linear structural causal models. Recent advances in non-linear ICA can recover the Jacobian of the model's inverse function, whose sparsity pattern reveals the underlying causal graph structure up to the same permutation ambiguity found in the linear setting~\cite{reizinger2023jacobian}. Our bipartite matching formulation and reward function could then be applied to resolve this ambiguity. We outline this potential generalization in Appendix~\ref{app:extensions-non-linear}.

\section{Conclusion}

We introduced a framework for causal structure learning in linear non-Gaussian models with cycles, leveraging a combinatorial characterization of equivalence classes via bipartite matchings. By formalizing experiment design as an adaptive submodular optimization problem, we developed a near-optimal greedy policy that incrementally resolves causal ambiguity through targeted interventions. Our sampling-based estimator enables practical implementation without exhaustive enumeration. Empirical results confirm that our method recovers the true graph with few interventions, often matching the feedback vertex set lower bound, despite having no access to the true structure.

\bibliographystyle{plain}
\bibliography{refs}

@article{jerrum2004polynomial,
  title={A polynomial-time approximation algorithm for the permanent of a matrix with nonnegative entries},
  author={Jerrum, Mark and Sinclair, Alistair and Vigoda, Eric},
  journal={Journal of the ACM (JACM)},
  volume={51},
  number={4},
  pages={671--697},
  year={2004},
  publisher={ACM New York, NY, USA}
}

@book{vershynin2018high,
  title={High-dimensional probability: An introduction with applications in data science},
  author={Vershynin, Roman},
  volume={47},
  year={2018},
  publisher={Cambridge university press}
}

@article{golovin2011adaptive,
  title={Adaptive submodularity: Theory and applications in active learning and stochastic optimization},
  author={Golovin, Daniel and Krause, Andreas},
  journal={Journal of Artificial Intelligence Research},
  volume={42},
  pages={427--486},
  year={2011}
}

@book{spirtes2000causation,
  title={Causation, prediction, and search},
  author={Spirtes, Peter and Glymour, Clark N and Scheines, Richard},
  year={2000},
  publisher={MIT press}
}

@article{mokhtarian2023unified,
  title={A unified experiment design approach for cyclic and acyclic causal models},
  author={Mokhtarian, Ehsan and Salehkaleybar, Saber and Ghassami, AmirEmad and Kiyavash, Negar},
  journal={Journal of Machine Learning Research},
  volume={24},
  number={354},
  pages={1--31},
  year={2023}
}

@article{richardson2013polynomial,
  title={A polynomial-time algorithm for deciding Markov equivalence of directed cyclic graphical models},
  author={Richardson, Thomas S},
  journal={arXiv preprint arXiv:1302.3600},
  year={2013}
}

@article{richardson2013discovery,
  title={A discovery algorithm for directed cyclic graphs},
  author={Richardson, Thomas S},
  journal={arXiv preprint arXiv:1302.3599},
  year={2013}
}

@inproceedings{mooij2020constraint,
  title={Constraint-based causal discovery using partial ancestral graphs in the presence of cycles},
  author={Mooij, Joris M and Claassen, Tom},
  booktitle={Conference on Uncertainty in Artificial Intelligence},
  pages={1159--1168},
  year={2020},
  organization={PMLR}
}

@article{shanmugam2015learning,
  title={Learning causal graphs with small interventions},
  author={Shanmugam, Karthikeyan and Kocaoglu, Murat and Dimakis, Alexandros G and Vishwanath, Sriram},
  journal={Advances in Neural Information Processing Systems},
  volume={28},
  year={2015}
}

@article{bongers2021foundations,
  title={Foundations of structural causal models with cycles and latent variables},
  author={Bongers, Stephan and Forr{\'e}, Patrick and Peters, Jonas and Mooij, Joris M},
  journal={The Annals of Statistics},
  volume={49},
  number={5},
  pages={2885--2915},
  year={2021},
  publisher={Institute of Mathematical Statistics}
}

@inproceedings{ghassami2020characterizing,
  title={Characterizing distribution equivalence and structure learning for cyclic and acyclic directed graphs},
  author={Ghassami, AmirEmad and Yang, Alan and Kiyavash, Negar and Zhang, Kun},
  booktitle={International Conference on Machine Learning},
  pages={3494--3504},
  year={2020},
  organization={PMLR}
}

@article{ghassami2019interventional,
  title={Interventional experiment design for causal structure learning},
  author={Ghassami, AmirEmad and Salehkaleybar, Saber and Kiyavash, Negar},
  journal={arXiv preprint arXiv:1910.05651},
  year={2019}
}

@inproceedings{ghassami2018budgeted,
  title={Budgeted experiment design for causal structure learning},
  author={Ghassami, AmirEmad and Salehkaleybar, Saber and Kiyavash, Negar and Bareinboim, Elias},
  booktitle={International Conference on Machine Learning},
  pages={1724--1733},
  year={2018},
  organization={PMLR}
}

@book{pearl2009causality,
  title={Causality},
  author={Pearl, Judea},
  year={2009},
  publisher={Cambridge university press}
}

@article{shimizu2006linear,
  title={A linear non-Gaussian acyclic model for causal discovery.},
  author={Shimizu, Shohei and Hoyer, Patrik O and Hyv{\"a}rinen, Aapo and Kerminen, Antti and Jordan, Michael},
  journal={Journal of Machine Learning Research},
  volume={7},
  number={10},
  year={2006}
}

@inproceedings{lacerda2008discovering,
  title={Discovering cyclic causal models by independent components analysis},
  author={Lacerda, Gustavo and Spirtes, Peter L and Ramsey, Joseph and Hoyer, Patrik O},
  booktitle={Conference on Uncertainty in Artificial Intelligence},
  pages={366--374},
  year={2008}
}

@inproceedings{kocaoglu2017cost,
  title={Cost-optimal learning of causal graphs},
  author={Kocaoglu, Murat and Dimakis, Alex and Vishwanath, Sriram},
  booktitle={International Conference on Machine Learning},
  pages={1875--1884},
  year={2017},
  organization={PMLR}
}

@inproceedings{eberhardt2005number,
  title={On the number of experiments sufficient and in the worst case necessary to identify all causal relations among n variables},
  author={Eberhardt, Frederick and Glymour, Clark and Scheines, Richard},
  booktitle={Conference on Uncertainty and Artificial Intelligence},
  pages={178--184},
  year={2005}
}

@article{he2008active,
  title={Active learning of causal networks with intervention experiments and optimal designs},
  author={He, Yang-Bo and Geng, Zhi},
  journal={Journal of Machine Learning Research},
  volume={9},
  number={Nov},
  pages={2523--2547},
  year={2008}
}

@article{hauser2014two,
  title={Two optimal strategies for active learning of causal models from interventional data},
  author={Hauser, Alain and B{\"u}hlmann, Peter},
  journal={International Journal of Approximate Reasoning},
  volume={55},
  number={4},
  pages={926--939},
  year={2014},
  publisher={Elsevier}
}

@article{greenewald2019sample,
  title={Sample efficient active learning of causal trees},
  author={Greenewald, Kristjan and Katz, Dmitriy and Shanmugam, Karthikeyan and Magliacane, Sara and Kocaoglu, Murat and Boix Adsera, Enric and Bresler, Guy},
  journal={Advances in Neural Information Processing Systems},
  volume={32},
  year={2019}
}

@article{squires2020active,
  title={Active structure learning of causal DAGs via directed clique trees},
  author={Squires, Chandler and Magliacane, Sara and Greenewald, Kristjan and Katz, Dmitriy and Kocaoglu, Murat and Shanmugam, Karthikeyan},
  journal={Advances in Neural Information Processing Systems},
  volume={33},
  pages={21500--21511},
  year={2020}
}

@inproceedings{ghassami2019counting,
  title={Counting and sampling from Markov equivalent DAGs using clique trees},
  author={Ghassami, AmirEmad and Salehkaleybar, Saber and Kiyavash, Negar and Zhang, Kun},
  booktitle={Proceedings of the AAAI Conference on Artificial Intelligence},
  volume={33},
  pages={3664--3671},
  year={2019}
}

@inproceedings{ahmaditeshnizi2020lazyiter,
  title={Lazyiter: a fast algorithm for counting Markov equivalent DAGs and designing experiments},
  author={AhmadiTeshnizi, Ali and Salehkaleybar, Saber and Kiyavash, Negar},
  booktitle={International Conference on Machine Learning},
  pages={125--133},
  year={2020},
  organization={PMLR}
}

@inproceedings{agrawal2019abcd,
  title={Abcd-strategy: Budgeted experimental design for targeted causal structure discovery},
  author={Agrawal, Raj and Squires, Chandler and Yang, Karren and Shanmugam, Karthikeyan and Uhler, Caroline},
  booktitle={International Conference on Artificial Intelligence and Statistics},
  pages={3400--3409},
  year={2019},
  organization={PMLR}
}

@inproceedings{forre2018constraint,
  title={Constraint-based causal discovery for non-linear structural causal models with cycles and latent confounders},
  author={Forr{\'e}, Patrick and Mooij, Joris M},
  booktitle={Conference on Uncertainty in Artificial Intelligence},
  year={2018}
}

@article{kocaoglu2017experimental,
  title={Experimental design for learning causal graphs with latent variables},
  author={Kocaoglu, Murat and Shanmugam, Karthikeyan and Bareinboim, Elias},
  journal={Advances in Neural Information Processing Systems},
  volume={30},
  year={2017}
}

@inproceedings{wienobst2021polynomial,
  title={Polynomial-time algorithms for counting and sampling Markov equivalent dags},
  author={Wien{\"o}bst, Marcel and Bannach, Max and Liskiewicz, Maciej},
  booktitle={Proceedings of the AAAI Conference on Artificial Intelligence},
  volume={35},
  pages={12198--12206},
  year={2021}
}

@article{wienobst2023polynomial,
  title={Polynomial-Time Algorithms for Counting and Sampling Markov Equivalent DAGs with Applications},
  author={Wien{\"o}bst, Marcel and Bannach, Max and Li{\'s}kiewicz, Maciej},
  journal={Journal of Machine Learning Research},
  volume={24},
  number={213},
  pages={1--45},
  year={2023}
}

@article{tigas2022interventions,
  title={Interventions, where and how? experimental design for causal models at scale},
  author={Tigas, Panagiotis and Annadani, Yashas and Jesson, Andrew and Sch{\"o}lkopf, Bernhard and Gal, Yarin and Bauer, Stefan},
  journal={Advances in Neural Information Processing Systems},
  volume={35},
  pages={24130--24143},
  year={2022}
}

@book{peters2017elements,
  title={Elements of causal inference: foundations and learning algorithms},
  author={Peters, Jonas and Janzing, Dominik and Sch{\"o}lkopf, Bernhard},
  year={2017},
  publisher={The MIT Press}
}

@article{lacerda2012discovering,
  title={Discovering cyclic causal models by independent components analysis},
  author={Lacerda, Gustavo and Spirtes, Peter L and Ramsey, Joseph and Hoyer, Patrik O},
  journal={arXiv preprint arXiv:1206.3273},
  year={2012}
}

@book{cormen2022introduction,
  title={Introduction to algorithms},
  author={Cormen, Thomas H and Leiserson, Charles E and Rivest, Ronald L and Stein, Clifford},
  year={2022},
  publisher={MIT press}
}

@article{reizinger2023jacobian,
  title={Jacobian-based causal discovery with nonlinear ICA},
  author={Reizinger, Patrik and Sharma, Yash and Bethge, Matthias and Sch{\"o}lkopf, Bernhard and Husz{\'a}r, Ferenc and Brendel, Wieland},
  journal={Transactions on Machine Learning Research},
  year={2023}
}

@inproceedings{khemakhem2020variational,
  title={Variational autoencoders and nonlinear ica: A unifying framework},
  author={Khemakhem, Ilyes and Kingma, Diederik and Monti, Ricardo and Hyvarinen, Aapo},
  booktitle={International conference on artificial intelligence and statistics},
  pages={2207--2217},
  year={2020},
  organization={PMLR}
}

@article{kivva2022identifiability,
  title={Identifiability of deep generative models without auxiliary information},
  author={Kivva, Bohdan and Rajendran, Goutham and Ravikumar, Pradeep and Aragam, Bryon},
  journal={Advances in Neural Information Processing Systems},
  volume={35},
  pages={15687--15701},
  year={2022}
}

@article{hyvarinen2000independent,
  title={Independent component analysis: algorithms and applications},
  author={Hyv{\"a}rinen, Aapo and Oja, Erkki},
  journal={Neural networks},
  volume={13},
  number={4-5},
  pages={411--430},
  year={2000},
  publisher={Elsevier}
}

\appendix

\section*{\LARGE{Supplementary Material}}
\addtocontents{toc}{\protect\setcounter{tocdepth}{2}}
\tableofcontents
\bigskip
\noindent\rule{\textwidth}{0.4pt}
\newpage

\section{Distribution Equivalence: Formal Definitions, Characterization, and Proofs}
\label{app:dist-equivalence}

This appendix provides the formal definitions and results omitted from the main text for brevity. We define two key notions: distribution-entailment equivalence (for parameterized LSCMs) and distribution equivalence (for graphical structures), and show that they are equivalent under the linear non-Gaussian SCM assumption.

\begin{definition}[Distribution Set]
\label{def:01}
Let \( G \) be a directed graph (without self-loops) with binary adjacency matrix \( B_G \in \{0,1\}^{n \times n} \). The \emph{distribution set} \( \cp(G) \) is the collection of all observational distributions \( P_{\bx} \) that can arise from any linear SCM consistent with \( G \):
\begin{align*}
    \cp(G) := \left\{
    P_{\bx} = (I - T_W)^{-1}_{\#}(P_{\be}) \;\middle|\;
    \supp(W) = \supp(B_G),\; P_{\be} \in \cp(\ce)
    \right\},
\end{align*}
where \( T_W \) denotes the linear operator defined by the matrix \( W \), and \( \cp(\ce) \) is the class of exogenous distributions defined in Assumption~\ref{ass: non-gaussian}.
\end{definition}

\begin{definition}[Distribution Equivalence]
\label{def:02}
Two directed \textbf{graphs} \( G_1 \) and \( G_2 \) are said to be \emph{distribution equivalent}, denoted by \( G_1 \equiv G_2 \), if \( \cp(G_1) = \cp(G_2) \).
\end{definition}

Now consider two \emph{parameterized} LSCMs,
\[
\bx = (I - W_1)^{-1} \be_1 \quad \text{and} \quad \bx = (I - W_2)^{-1} \be_2,
\]
with \( P_{\be_1}, P_{\be_2} \in \cp(\ce) \), that induce the same observational distribution. By ICA identifiability, there are some permutation matrix \( P \) and diagonal scaling matrix \( D \) such that:
\begin{align}
    I - W_2 = P D (I - W_1), \quad \be_2 := P D \be_1.
\end{align}
In this case, the two \textbf{models} are said to be \emph{distribution-entailment equivalent}—they have different structure matrices but yield the same distribution over observables.

In the literature, these two concepts are typically treated as distinct. However, in our linear non-Gaussian setting, we show that they coincide. To establish this, we proceed with an algebraic characterization of distribution equivalence.

\begin{proposition}
\label{prop:1}
Two directed graphs (without self-loops) \( G_1 \) and \( G_2 \) are distribution equivalent, that is, \( G_1 \equiv G_2 \), if and only if there exists a permutation matrix \( P \in \{0,1\}^{n \times n} \) such that
\begin{align}
    I + B_{G_2} = P (I + B_{G_1}).
\end{align}
\end{proposition}

This result implies that if two LSCMs are distribution-entailment equivalent, then their associated graphs are distribution equivalent (see the proof~\ref{app:prop1-proof}). Conversely, all models consistent with the same observational distribution must correspond to graphs in the same distribution equivalence class. For linear SCMs with non-Gaussian noise, the two notions of equivalence are therefore interchangeable. 

\vspace{1em}
\noindent
To understand how graphs within the same equivalence class differ, we introduce the cycle decomposition of permutations and relate it to structural changes in the graph. While this idea is discussed in~\cite{lacerda2012discovering}, we include it here for completeness—especially since their treatment is limited to distribution-entailment equivalent LSCMs, rather than distribution-equivalent graphs.

\subsection{Cycle Decomposition and Structural Characterization}

Let \(\sigma: [n] \to [n]\) be a permutation. A \emph{cycle decomposition} of \(\sigma\) expresses it as a product of disjoint cycles. A cycle \((i_1, i_2, \ldots, i_\ell)\) maps \(i_j \mapsto i_{j+1}\) for \(1 \le j < \ell\), and \(i_\ell \mapsto i_1\). Two cycles are disjoint if they share no elements. Every permutation has a unique cycle decomposition (up to cycle order).

Based on Proposition~\ref{prop:1}, if \( G_1 \equiv G_2 \), then there exists a permutation \( P \) satisfying \( I + B_{G_2} = P(I + B_{G_1}) \). The next proposition relates this to cycles in the graph.

\begin{proposition}
\label{prop:2}
Let \(G_1 \equiv G_2\). Let \(P_{\pi}\) be the permutation matrix such that \(I + B_{G_2} = P_{\pi}(I + B_{G_1})\). Suppose \(\pi = c_1c_2\dots c_k\) is its cycle decomposition, where \(c_j = (i_1^j, i_2^j, \ldots, i_{n_j}^j)\) and \(n_j \ge 2\). Then for each \(j \in [k]\),
\begin{itemize}
    \item \( (x_{i^j_1} \rightarrow x_{i^j_2} \rightarrow \cdots \rightarrow x_{i^j_{n_j}} \rightarrow x_{i^j_1}) \) forms a cycle in \(G_1\), and
    \item \( (x_{i^j_1} \leftarrow x_{i^j_2} \leftarrow \cdots \leftarrow x_{i^j_{n_j}} \leftarrow x_{i^j_1}) \) forms a cycle in \(G_2\).
\end{itemize}
\end{proposition}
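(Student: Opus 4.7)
The plan is to read off the graph structure directly from the matrix identity $I + B_{G_2} = P_\pi (I + B_{G_1})$ by using the fact that $I + B_G$ has $1$'s on its diagonal (because no self-loops are allowed by Assumption~\ref{ass: no self-loop}). Concretely, I would first unfold the permutation-matrix action on rows: with the convention that $P_\pi$ is the matrix whose $i$-th row is $e_{\pi(i)}^\top$, the identity says that for every $i \in [n]$, the $i$-th row of $I + B_{G_2}$ equals the $\pi(i)$-th row of $I + B_{G_1}$. All downstream work is an entry-by-entry application of this fact.

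Next, fix one non-trivial cycle $c_j = (i_1^j, i_2^j, \ldots, i_{n_j}^j)$ in the decomposition of $\pi$, so that $\pi(i_t^j) = i_{t+1}^j$ (indices mod $n_j$). For each $t$, compare the diagonal entry of $I + B_{G_2}$ at position $(i_t^j, i_t^j)$, which equals $1$, with the corresponding entry on the right-hand side, namely $(I + B_{G_1})_{\pi(i_t^j),\, i_t^j} = (I + B_{G_1})_{i_{t+1}^j,\, i_t^j}$. Since $n_j \ge 2$, we have $i_{t+1}^j \ne i_t^j$, so this is an off-diagonal entry, forcing $[B_{G_1}]_{i_{t+1}^j,\, i_t^j} = 1$, which (per the paper's convention that $[B_G]_{ab}=1$ encodes the edge $x_b \to x_a$) is exactly the edge $x_{i_t^j} \to x_{i_{t+1}^j}$ in $G_1$. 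Chaining this for $t = 1,\ldots,n_j$ gives the forward cycle in $G_1$.

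To get the reversed cycle in $G_2$, invert the permutation: $I + B_{G_1} = P_{\pi^{-1}}(I + B_{G_2})$, and the same cycle $c_j$ of $\pi$ corresponds to the reversed cycle $(i_{n_j}^j, \ldots, i_1^j)$ of $\pi^{-1}$. Repeating the diagonal-entry argument on this flipped identity yields $[B_{G_2}]_{i_{t-1}^j,\, i_t^j} = 1$ for all $t$, i.e., edges $x_{i_t^j} \to x_{i_{t-1}^j}$ in $G_2$, which concatenate into the cycle $x_{i_1^j} \leftarrow x_{i_2^j} \leftarrow \cdots \leftarrow x_{i_{n_j}^j} \leftarrow x_{i_1^j}$. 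Since cycles in $\pi$ are disjoint, the arguments for different $j$'s are independent.

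The main obstacle is bookkeeping rather than any deep mathematics: one must be consistent about (i) the convention relating $P_\pi$ to $\pi$ (row- vs.\ column-permutation, and whether $\pi$ or $\pi^{-1}$ indexes the rows), and (ii) the convention relating $[B_G]_{ab}$ to the direction of the edge (the paper uses $[B_G]_{ij}=\mathbbm{1}\{W_{ij}\ne 0\}$, so a $1$ at $(a,b)$ means $x_b \to x_a$). A minor technicality to flag is that the argument only uses the diagonal entries of $I + B_{G_2}$ to force specific off-diagonal $1$'s in $I + B_{G_1}$; one should briefly note that because these forced positions are off-diagonal (using $n_j \ge 2$), they correctly translate into edges of $B_{G_1}$ rather than being absorbed by the identity part. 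Once these conventions are pinned down, the proof reduces to the two-line entry comparison sketched above.
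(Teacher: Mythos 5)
Your proposal is correct and follows essentially the same route as the paper's proof: both extract from the no-self-loop constraint that each diagonal entry of \(P_{\pi}(I+B_{G_1})\) equals \(1\), force \([B_{G_1}]_{i^j_{t+1},\,i^j_t}=1\) along each nontrivial cycle of \(\pi\), and then obtain the reversed cycle in \(G_2\) by applying the identical argument to \(I+B_{G_1}=P_{\pi^{-1}}(I+B_{G_2})\). Your explicit bookkeeping of the row-permutation and edge-direction conventions, and the remark that the forced entries are off-diagonal since \(n_j\ge 2\), are consistent with (and slightly more careful than) the paper's presentation.
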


\noindent\textit{\textbf{Remark.}}
Proposition~\ref{prop:2} highlights that each cycle in the permutation \( \pi \) corresponds to a cycle in one graph whose direction is reversed in the other. However, this characterization does not fully capture the structural relationship between distribution-equivalent graphs. In the main text, we define the \emph{Cycle Reversion} (CR) operation~\ref{def:03}, which provides a more complete description. Unlike the statement of the proposition, the CR operation explicitly specifies how incoming edges from nodes outside the cycle are redirected after reversion. This reveals that the permutation \( \pi \) encodes not only reversed cycles, but also a deeper structural transformation of the graph that preserves distribution equivalence.

\subsection{Proofs of Section \ref{app:dist-equivalence}}
\subsubsection{Proof of Proposition \ref{prop:1}}
\label{app:prop1-proof}
We first show that \( G_1 \equiv G_2 \) implies the existence of such a permutation matrix \( P \).
Suppose \( \cp(G_1) = \cp(G_2) = \cp \), i.e., both graphs admit the same set of observational distributions. Take an arbitrary \( P_{\bx} \in \cp \), then by definition:

\(\bx = (I - W_1)^{-1} \be_1\) with \(\supp(W_1) = \supp(B_{G_1}), \; P_{\be_1} \in \cp(\ce),\)
and also,
\(\bx = (I - W_2)^{-1} \be_2 \) with \(\supp(W_2) = \supp(B_{G_2}), \; P_{\be_2} \in \cp(\ce).\)

From the ICA identifiability result, since \( \bx \) admits two LSCM representations with non-Gaussian independent noises, there must exist a permutation matrix \( P \) and a diagonal scaling matrix \( D \) \(I-W_2 = PD(I-W_1)\) and \(\be_2 = PD\be_1\).

Now, since both graphs lack self-loops, the diagonal entries of \( I - W_2 \) must be 1. We expand this constraint.
Let \( w_i^1 \) be the \( i \)-th row of \( W_1 \), and \( u_j \in \R^n \) be the standard basis row vector with 1 in position \( j \). Then:
\[
P = \begin{bmatrix}
u_{\pi(1)} \\
u_{\pi(2)} \\
\vdots \\
u_{\pi(n)}
\end{bmatrix},
\quad
D = \operatorname{diag}(d_1, d_2, \ldots, d_n),
\]
where \( \pi \) is the permutation induced by \( P \).
Therefore, the matrix product becomes:
\begin{align}
PD(I - W_1) = 
\begin{bmatrix}
d_{\pi(1)} (u_{\pi(1)} - w^1_{\pi(1)}) \\
d_{\pi(2)} (u_{\pi(2)} - w^1_{\pi(2)}) \\
\vdots \\
d_{\pi(n)} (u_{\pi(n)} - w^1_{\pi(n)})
\end{bmatrix}.
\label{eq:PDW1}
\end{align}

To ensure the diagonal entries of \( I - W_2 = PD(I - W_1) \) are 1, we must have, for all \( i \in [n] \),
\begin{align}
d_i = \frac{1}{u_{i, \pi^{-1}(i)} - w^1_{i, \pi^{-1}(i)}} =
\begin{cases}
1 & \text{if } \pi(i) = i, \\
\frac{-1}{w^1_{i, \pi^{-1}(i)}} & \text{if } \pi(i) \neq i.
\end{cases}
\label{eq:di_values}
\end{align}

Hence, \(D\) is uniquely determined by \(P\) and \(W_1\). Since \(\bbp_\bx\) is arbitrary in \(\mathcal{P}\), we can select a distribution generated by \(W_1 = -B_{G_1}\). With this choice we get \(D=I\), yielding \(I-W_2 = P(I+B_{G_1})\). Since \(\text{supp}(W_2) = \text{supp}(B_{G_2})\), we conclude that \(W_2 = -B_{G_2}\).

Now, we will prove the other direction. For every \(\bbp_\bx \in \cp(G_1)\), there is some \(W_1\) and \(\bbp_{\be_1}\in\cp(\ce)\) such that \(\bx = (I-W_1)^{-1}\be_1\) and \(\mathrm{supp}(W_1) = \mathrm{supp}(B_{G_1})\). Define the scaling matrix \(D\) as defined in (\ref{eq:di_values}) based on the given \(P\) in the assumption and \(W_1\). Then define \(W_2 = I - PD(I-W_1)\) and \(\be_2 = PD\be_1\). Applying a scaling and a permutation matrix to an arbitrary matrix does not add or remove zero entries. Therefore, it can be shown that \(\mathrm{supp}(W_2) = \mathrm{supp}(B_{G_2})\). This results in \(\bbp_\bx \in \cp(G_2)\) and then \(\cp(G_1) \subseteq \cp(G_2)\). With the same arguments and substituting \(P\) by \(P^{T}\), we have \(\cp(G_2) \subseteq \cp(G_1)\) and therefore, \(G_1 \equiv G_2\).
\qed

\subsubsection{Proof of Proposition \ref{prop:2}}
\label{app:prop2-proof}
Let \(b_i\) denote the \(i\)-th row of \(B_{G_1}\), and let \(u_j\) be the standard basis row vector in \(\mathbb{R}^n\) with 1 at position \(j\) and 0 elsewhere. As in the proof of Proposition~\ref{prop:1}, the fact that all diagonal entries of \(P_{\pi}(I + B_{G_1})\) equal 1 implies that \(u_{\pi(i), i} + b_{\pi(i), i} = 1\) for all \(i \in [n]\).

If \(i\) is a fixed point, then \(\pi(i) = i\) and the condition becomes \(u_{i, i} + b_{i, i} = 1 + 0 = 1\), since \(G_1\) has no self-loops. Now, let \(i\) belong to a cycle \(c_j = (i^j_1, i^j_2, \ldots, i^j_{n_j})\) of length \(n_j \ge 2\). Then, applying the condition to each \(i^j_r\) gives \(b_{i^j_{r+1}, i^j_r} = 1\) for all \(r = 1, \ldots, n_j\) (with indices modulo \(n_j\)). Thus, the sequence of edges \((x_{i^j_1} \rightarrow x_{i^j_2} \rightarrow \cdots \rightarrow x_{i^j_{n_j}} \rightarrow x_{i^j_1})\) forms a cycle in \(G_1\).

To obtain the analogous result in \(G_2\), observe that \(I + B_{G_1} = P_{\pi^{-1}}(I + B_{G_2})\). Repeating the same argument for \(G_2\) using \(P_{\pi^{-1}}\) implies that \((x_{i^j_1} \leftarrow x_{i^j_2} \leftarrow \cdots \leftarrow x_{i^j_{n_j}} \leftarrow x_{i^j_1})\) forms a cycle in \(G_2\).
\qed

\subsection{Proof of Theorem \ref{thm: SCC}}
\label{app:thm-SCC-proof}
To prove this, we show that the CR operation does not move any vertex from one SCC to another. Since any two distribution-equivalent graphs can be transformed into each other through a sequence of CR operations, this implies that SCCs are preserved.
    
Let \( C = (x_0, x_1, \dots, x_{m-1}) \) be a directed cycle in a graph \( G \), and let \( \cs \) denote the strongly connected component (SCC) that contains all nodes of \( C \). We apply the \emph{Cycle Reversion (CR)} operation to \( C \) and denote the resulting graph by \( G' \). We aim to prove that SCCs remain unchanged under this operation; specifically, no vertex in \( \cs \) exits the component, and no external vertex enters it.

\paragraph{SCCs do not shrink.}
Take any two nodes \( u, v \in \cs \). Since \( \cs \) is strongly connected in \( G \), there exist directed paths \( u \leadsto v \) and \( v \leadsto u \) in \( G \). If these paths do not pass through nodes in \( C \), they remain unaffected in \( G' \).

Now, suppose that the path from \( u \) to \( v \) passes through nodes in \( C \). Without loss of generality, assume the path takes the form:
\[
u \rightarrow p_1 \rightarrow\underbrace{x_{i} \rightarrow x_{i + 1} \rightarrow \cdots \rightarrow x_{i+k}}_{\text{subpath in the cycle } C} \rightarrow p_2 \rightarrow v,
\]
where \( p_1 \) is a subpath from \( u \) to node \( x_{i} \in C \), and \( p_2 \) is a subpath from \( x_{i+k} \in C \) to \( v \) (with indices modulo \( m \)). Before cycle reversion, the edges within \( C \) follow the original cycle direction \( x_0 \rightarrow x_1 \rightarrow \cdots \rightarrow x_{m-1} \rightarrow x_0 \). After applying CR, this direction is reversed: \( x_{m-1} \rightarrow x_{m-2} \rightarrow \cdots \rightarrow x_0 \rightarrow x_{m-1} \) and any edge from a node \( y \notin C \) into a node \( x_j \in C \) is redirected to point to \( x_{j-1} \). Thus, \( p_1 \), which originally led into \( x_{i} \), will now lead to its predecessor \( x_{i-1} \) in the reversed cycle. So we can construct a modified path \( p_1' \) from \( u \) to \( x_{i-1} \).
Moreover, within the reversed cycle, there exists a path from \( x_{i-1} \rightarrow x_{i-2} \rightarrow \cdots \rightarrow x_{i+k} \), corresponding to the reversed traversal of the original path segment in \( C \). Since outgoing edges from nodes in \( C \) to nodes outside are preserved, \( p_2 \) remains valid from \( x_{i+k} \) to \( v \).

Hence, the new path in \( G' \) is:
\[
u \rightarrow p_1' \rightarrow x_{i-1} \rightarrow x_{i-2} \rightarrow \cdots \rightarrow x_{i+k} \rightarrow p_2 \rightarrow v.
\]
By symmetric reasoning, we can similarly construct a path from \( v \) to \( u \) in \( G' \), showing that \( u \) and \( v \) remain strongly connected. Therefore, SCCs do not shrink under CR.

\paragraph{SCCs do not expand.}
Assume for contradiction that a node \( z \notin \cs \) becomes strongly connected with all nodes in \( \cs \) after applying CR, i.e., \( z \in \cs' \), where \( \cs' \) is the SCC containing \( \cs \) in \( G' \). This would mean there exist paths \( z \leadsto u \) and \( u \leadsto z \) in \( G' \) for each \( u \in \cs \). Now, reapplying CR to reverse the cycle \( C \) again (returning to graph \( G \)), the same argument as above shows that SCCs cannot shrink. Thus, \( z \in \cs \) in \( G \) as well, contradicting the assumption \( z \notin \cs \).

\paragraph{Conclusion.}
Since no node leaves or enters \( \cs \) under the CR operation, the strongly connected components remain unchanged. Figure~\ref{fig:scc-maintained} provides a visual illustration of this argument.
\qed

\subsection{Example: Matchings in Bipartite Graphs}
\label{app:matching-example}

As discussed in Section~\ref{sec:matching}, each graph in a distribution equivalence class corresponds to a perfect matching in a bipartite graph defined from the matrix \(I-W_{\mathrm{ICA}} \). The following example illustrates how a specific valid row permutation corresponds to a certain perfect matching. 

\begin{equation*}
I - W_{\mathrm{ICA}} =
\begin{pmatrix}
\circ & \mathcolor{red}{\times} & \times & \times & \circ \\
\mathcolor{red}{\times} & \circ & \circ & \times & \times \\
\circ & \times & \circ & \mathcolor{red}{\times} & \times \\
\times & \times & \mathcolor{red}{\times} & \circ & \circ \\
\times & \circ & \times & \circ & \mathcolor{red}{\times} \\
\end{pmatrix}
\end{equation*}

Here, cross marks indicate non-zero entries in the matrix, and the red entries mark a specific row permutation \( \pi \) that selects a unique non-zero element in each row, corresponding to a perfect matching. This selection defines a valid permutation matrix \( P_\pi \) such that \( P_\pi (I-W_{\mathrm{ICA}}) \) is a valid causal model in the equivalence class with non-zero elements on the diagonal.
Figure~\ref{fig:matching_bipartite} further illustrates how such permutations correspond to bipartite matchings.

\begin{figure}[t]
    \centering
    \begin{subfigure}[t]{0.36\linewidth}
        \centering
        \includegraphics[width=0.55\linewidth]{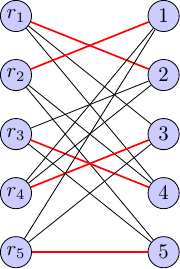}
        \caption{Permutation as a matching.}
        \label{fig:matching_bipartite}
    \end{subfigure}
    \hfill
    \begin{subfigure}[t]{0.58\linewidth}
        \centering
        \includegraphics[width=0.55\linewidth]{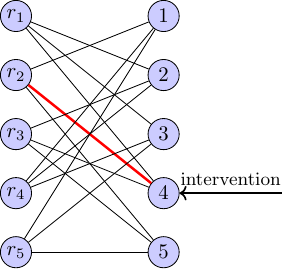}
        \caption{Matching after intervention.}
        \label{fig:matching-after-intervention}
    \end{subfigure}
    \caption{(a) Each row permutation corresponds to a matching between sources and observed variables. (b) An intervention on a specific variable reveals its correct row assignment, narrowing the equivalence class.}
    \label{fig:combined-matching}
\end{figure}

\section{Proofs and Examples of Section \ref{sec: Interventional Distribution Information}}
\subsection{Proof of Proposition \ref{prop: intervention information}}
\label{app:prop-intervention-inf-proof}
Applying ICA to both observational and interventional data yields \( P D (I - W) \) and \( P_i D_i (I - W^{(i)}) \), where \( P, P_i \) are permutation matrices and \( D, D_i \) are diagonal scaling matrices.

Since \( W \) and \( W^{(i)} \) differ only in the \( i \)-th row, and all rows are linearly independent (due to invertibility), their normalized forms will differ in exactly one row. This unique row in \( P D (I - W) \), absent in \( P_i D_i (I - W^{(i)}) \), corresponds to the \( i \)-th row of \( W \), revealing its support and weights. Thus, the valid row permutation must place this row at index \( i \).

\subsection{Example: How Interventions Reveal True Matching Edges}
\label{app:intervention-matching-example}

As discussed in Proposition~\ref{prop: intervention information}, interventions provide critical information about the true matching in the bipartite representation. Recall that each perfect matching in the bipartite graph \( G_b = (\{r_1, \dots, r_n\}, [n], E) \), where an edge \( (r_i, j) \in E \) exists iff \( (I-W_{\mathrm{ICA}})[i,j] \neq 0 \), corresponds to a valid row permutation matrix \( P_\pi \) defining a distribution-equivalent graph.

When we intervene on variable \( x_i \), the corresponding row in \( I - W^{(i)} \) becomes identifiable. In the bipartite graph, this uniquely reveals the edge \( (r_{\pi^{-1}(i)}, i) \), which is part of the true matching. This discovery not only pins down the correct match for one node pair but also restricts the set of consistent permutations, narrowing the equivalence class.

\paragraph{Example.} Suppose ICA applied to observational and interventional data (intervention on \( x_4 \)) yields the following recovered matrices:

\[
P_1(I - W) =
\begin{pmatrix}
r_1^T \\
\mathcolor{red}{r_2^T} \\
r_3^T \\
r_4^T \\
r_5^T \\
\end{pmatrix}
\qquad
P_2(I - W^{(4)}) =
\begin{pmatrix}
r_3^T \\
r_5^T \\
r_1^T \\
\mathcolor{red}{u_4^T} \\
r_4^T \\
\end{pmatrix}
\]

Here, row \( r_2^T \) appears only in the observational matrix, while \( u_4^T = [0, 0, 0, 1, 0]\) appears only in the interventional one. This mismatch allows us to identify \( r_2^T \) as the true row of \( x_4 \), revealing the matching edge \( (r_2, 4) \) in the bipartite graph. As this process is repeated across multiple variables, more edges in the true matching are discovered, eventually resolving the full permutation \( \pi \). Figure~\ref{fig:matching-after-intervention} illustrates this idea of how such interventional data reveals some information about the true hidden perfect matching corresponding to the true causal model.

\section{Proofs of Section \ref{sec: Experiment Design for Causal Structure Learning}}
\subsection{Proof of Theorem \ref{thm:adaptive}}
\label{app:adaptive-proof}
In this appendix, we provide the full proof that the reward function
\[
f(\mathcal I,\phi)\;=\;|\Omega|\;-\;\bigl|\Omega^{(\mathcal I,\phi)}\bigr|
\]
is both adaptive monotone and adaptive submodular.

\subsection*{Adaptive Monotonicity}

By construction, performing an additional intervention can only eliminate more graphs (never re‐introduce any). Hence, for any partial realization \(\psi\) and any \(v\notin\dom(\psi)\),
\[
f\bigl(\dom(\psi)\cup\{v\},\phi\bigr)
\;\ge\;
f\bigl(\dom(\psi),\phi\bigr)
\quad
\text{for every realization } \phi.
\]
Taking expectation conditional on \(\Phi\sim\psi\) preserves the inequality, establishing adaptive monotonicity (inequality~\eqref{eq:adaptive_monotone}).

\subsection*{Adaptive Submodularity}

Fix two partial realizations \(\psi\subseteq\psi'\) with
\[
\mathcal I_1=\dom(\psi),\quad \mathcal I_2=\dom(\psi'),\quad \mathcal I_1\subseteq \mathcal I_2,
\]
and let \(\phi\) be any realization consistent with \(\psi'\).  Define
\[
\Omega_1 \;=\;\Omega^{(\mathcal I_1,\phi)},\quad
\Omega_2 \;=\;\Omega^{(\mathcal I_2,\phi)},
\]
and denote their sizes by
\[
N_1=|\Omega_1|,\quad N_2=|\Omega_2|,
\]
and \(N_2\le N_1\).  Let \(v\notin \mathcal I_2\) be the next intervention candidate.  In the bipartite‐matching view, assume that \(v\) has \(k\) possible edges across \(\Omega_1\). Concretely, recall that each candidate graph in our current equivalence class \(\Omega_1\) corresponds to a perfect matching in the bipartite graph \(G_b\) whose left nodes are rows \(r_1,\dots,r_n\) and whose right nodes are column positions \(1,\dots,n\).  Before intervening on variable \(v\), the right‐hand node \(v\) may be matched to different rows \(r_i\) across the \(\lvert\Omega_1\rvert=N_1\) graphs.  Call these distinct row‐indices \(i=z_1,z_2,\dots,z_k\), so that the set of “candidate edges” for node \(v\) is
\[
\{\, (r_{\,i},v)\;:\; i=z_1,z_2,\dots,z_k\},
\]
and each graph \(g\in\Omega_1\) picks exactly one of these edges in its matching (Figure~\ref{fig: reduction of the equivalence class}). 

\begin{figure}[t]
    \centering
    \includegraphics[width=0.6\linewidth]{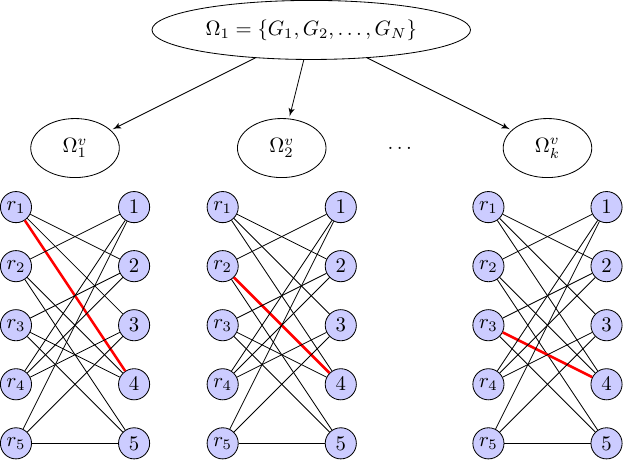}
    \caption{Illustration of equivalence class partitioning induced by a candidate intervention on variable \(v=4\). The top node represents the current equivalence class \(\Omega_1 = \{G_1, \dots, G_N\}\), and each child \(\Omega_i^v\) consists of graphs whose perfect matching assigns variable \(v\) to row \(r_{z_i}\), i.e., includes the edge \((r_{z_i}, v)\). The bipartite graphs below depict three such subsets, each highlighting a different candidate edge \((r_{z_i}, v)\) in red. This illustrates how an intervention on \(v\) resolves ambiguity by eliminating all but one of these subsets, reducing the equivalence class.}
    \label{fig: reduction of the equivalence class}
\end{figure}

Let
\[
n_i \;=\;\bigl|\{\,g\in\Omega_1 : g\text{ matches }v\text{ to }r_{z_i}\}\bigr|\,, 
\qquad
\sum_{i=1}^k n_i = N_1.
\]
Hence prior to the intervention, the probability that \(v\) is matched to row \(r_i\) is 
\(\;p_i = n_i/N_1\).  Once we intervene on \(v\) and recover its true structural row, we immediately identify which one of the \(k\) edges \((r_i,v)\) is correct, and eliminate all graphs in \(\Omega_1\) that did not use that edge.
If edge \(i\) is observed, exactly \(n_i\) graphs survive, so the number of
eliminated graphs is \(N_1-n_i\).  Hence, the conditional expected marginal
benefit under \(\psi\) is
\begin{align}
\Delta(v|\psi)
&=\sum_{i=1}^k p_i\,(N_1-n_i)
=\sum_{i=1}^k \frac{n_i}{N_1}(N_1-n_i)
= N_1\sum_{i=1}^k p_i(1-p_i).
\label{eq:delta1_recap}
\end{align}

Similarly, under the  realization \(\psi'\), only the subset
\(\Omega_2\subseteq\Omega_1\) remains.  Let
\[
m_i=\bigl|\{\,g\in\Omega_2: g\text{ matches }v\text{ to }r_{z_i}\}\bigr|,\quad
\sum_{i=1}^k m_i=N_2,
\]
and set \(q_i=m_i/N_2\).  Then
\begin{align}
\Delta(v|\psi')
= N_2\sum_{i=1}^k q_i(1-q_i).
\label{eq:delta2_recap}
\end{align}

We now show
\[
\Delta(v|\psi)\;-\;\Delta(v|\psi')
\;=\;
N_1\sum_{i=1}^k p_i(1-p_i)
\;-\;
N_2\sum_{i=1}^k q_i(1-q_i)
\;\ge\;0.
\]
To see this, note that transition from \(\Omega_1\) to \(\Omega_2\) can be
viewed as \textbf{removing} one graph in $\Omega_1\backslash \Omega_2$ at a time.  We will verify that removing a
single graph from a collection of size \(N\) can only \textbf{decrease}
\(\;N\sum_i p_i(1-p_i)\).

\paragraph{Removal of a single graph.}
Suppose we remove one graph of “type” \(j\) from a collection of size
\(N_1\).  Then
\[
n'_j=n_j-1,\quad n'_i=n_i\;(i\neq j),\quad N'=N_1-1,
\]
and the new probabilities are
\[
p'_j=\frac{n_j-1}{N'},\quad
p'_i=\frac{n_i}{N'}\;(i\neq j).
\]
We compare
\[
M(\Omega_1)
= N_1\sum_i p_i(1-p_i)
\quad\text{and}\quad
M(\Omega')
= N'\sum_i p'_i(1-p'_i).
\]
Let \(S=\sum_i n_i^2\).  Then
\[
M(\Omega_1)
= N_1\Bigl(1-\sum_i p_i^2\Bigr)
= N_1-\frac{S}{N_1},
\]
and
\[
M(\Omega')
= N'-\sum_i\frac{(n'_i)^2}{N'}
= (N_1-1)
- \frac{\,\bigl(S - n_j^2 + (n_j-1)^2\bigr)\!}{N_1-1}
= (N_1-1)
- \frac{S - 2n_j +1}{N_1-1}.
\]
Hence
\[
\begin{aligned}
M(\Omega_1)-M(\Omega')
&=\Bigl(N_1-\tfrac{S}{N_1}\Bigr)
- \Bigl((N_1-1)-\tfrac{S -2n_j +1}{N_1-1}\Bigr)\\[-4pt]
&=1-\frac{S}{N_1}+\frac{S-2n_j+1}{N_1-1}\\
&=\frac{N_1(N_1-1)-S(N_1-1)+N_1(S-2n_j+1)}{N_1(N_1-1)}\\
&= \frac{N_1^2 - 2n_jN_1 + S + \left(n_j^2 - n_j^2\right)}{N_1(N_1-1)}\\
&=\frac{(N_1-n_j)^2+\sum_{i\neq j}n_i^2}{N_1(N_1-1)}\;\ge\;0.
\end{aligned}
\]
Thus, removing a single graph cannot increase \(M\).  
\paragraph{Induction to multiple removals.}
To reach \(\Omega_2\) of size \(N_2\), we remove \(N_1-N_2 \) graphs
one at a time.  Each removal step leaves \(M\) unchanged or decreases it.
Hence
\[
N_1\sum_i p_i(1-p_i)
\;\ge\;
N_2\sum_i q_i(1-q_i)
\quad\Longrightarrow\quad
\Delta(v|\psi)\;\ge\;\Delta(v|\psi'),
\]
establishing adaptive submodularity \eqref{eq:adaptive_submodular}.  

By the result of Golovin and Krause~\citep{golovin2011adaptive}, adaptive monotonicity and submodularity guarantee that the adaptive greedy policy achieves at least a \((1 - 1/e)\)-approximation to the optimal policy. 
\qed

\subsection*{Alternative \(\boldsymbol{\alpha}\)–Parameterization Proof}

An equivalent way to see that conditioning (going from \(\psi\) to \(\psi'\)) only \emph{decreases} the marginal benefit is via the following re-parameterization. We present this alternative \(\boldsymbol{\alpha}\)–parameterization proof because it is more general, as it does not rely on the discreteness of variables. Unlike previous approaches, this proof remains valid even when the variables are continuous, thereby broadening the scope of applicability to a wider range of models.

Fix a partial realization \(\psi\) so that \(\Omega_1=\Omega^{(\dom(\psi),\phi)}\) has size \(N\) and assign to the next intervention \(v\) the matching edge $(r_{z_j},v)$ with probability
\[
p_j \;=\;\frac{n_j}{N}, 
\]
where \(n_j\) of the \(N\) surviving graphs in \(\Omega_1\) match \(v\) to row \(r_j\).  After further conditioning under \(\psi'\), only \(\Omega_2\subseteq\Omega_1\) of size \(N'\le N\) remains, and among those \(m_j\le n_j\) graphs still use edge \(j\).  Define
\[
\alpha_j \;=\;\frac{m_j}{n_j}\in[0,1], \quad Z \;=\;\sum_{i=1}^k p_i\,\alpha_i.
\]

Conditioned on \(\psi'\), the probability that we observe edge \(j\) is
\[
\frac{m_j}{\sum_{i}m_i} = \frac{\alpha_jn_j}{\sum_{i}\alpha_in_i} = \frac{N\alpha_jp_j}{N\sum_{i}\alpha_ip_i} = \frac{\alpha_jp_j}{Z},
\]
and upon observing it we eliminate \(m_j = \alpha_jn_j\)
graphs from $\sum_{i=1}^{k}m_i = ZN$ graphs.  Hence the \emph{conditional expected marginal benefit} under \(\psi'\) is
\begin{align}
\Delta\bigl(v|\psi'\bigr)
&=\sum_{j=1}^k \frac{p_j\,\alpha_j}{Z}\,\bigl(NZ - n_j\,\alpha_j\bigr)
=N\sum_{j=1}^k p_j\,\alpha_j\,\bigl(1 - \frac{p_j\,\alpha_j}{Z}\bigr)
\nonumber\\
&= N\sum_{j=1}^k\frac{p_j\,\alpha_j\,(Z-p_j\alpha_j)}{Z}
\;=\;
\frac{N}{Z}\Bigl(Z^2 - \sum_{j=1}^k p_j^2\,\alpha_j^2\Bigr)
\nonumber\\
&=\;
\frac{N}{Z}\sum_{i\neq j} p_i\,\alpha_i\;p_j\,\alpha_j.
\label{eq:alt-delta-alpha}
\end{align}

Equivalently, in matrix form let \(\bp=(p_1,\dots,p_k)^T\) and
\(\balpha=(\alpha_1,\dots,\alpha_k)^T\).  Then
\(\bp^T(\balpha\balpha^T - \diag(\balpha^2))\bp=\sum_{i\neq j}p_i\alpha_i\,p_j\alpha_j\),
and $Z=\sum_{j=1}^k p_j\,\alpha_j = \bp^T\balpha$, So we may define
\[
f(\,\balpha\,)
\;=\;
\Delta(v|\psi')
\;=\;
N\frac{\bp^T\bigl(\balpha\balpha^T - \diag(\balpha^2)\bigr)\bp}{\bp^T\balpha}.
\]
To prove adaptive submodularity, it suffices to show \(f\) is coordinate-wise
non-decreasing on \([0,1]^k\), since then conditioning from
\(\balpha=\mathbf1\) (smaller partial observation) to any \(\balpha\in[0,1]^k\) (bigger partial observation) can only decrease \(f\) (the marginal benefit of adding \(v\) to the intervention set).

Now, we can compute the partial derivatives as follows
\begin{align}
    \frac{\partial f}{\partial \alpha_i}
=
N\;
\frac{
\Bigl(\frac{\partial}{\partial \alpha_i}\bp^T(\balpha\balpha^T-\diag(\balpha^2))\bp\Bigr)\,Z
-
\bp^T(\balpha\balpha^T-\diag(\balpha^2))\bp\;\frac{\partial Z}{\partial \alpha_i}
}{
Z^2
}.
\label{eq:df-dqi}
\end{align}
One checks easily
\[
\frac{\partial Z}{\partial \alpha_i}=p_i,
\quad
\frac{\partial}{\partial \alpha_i}
\bigl[\bp^T(\balpha\balpha^T-\diag(\balpha^2))\bp\bigr]
=2\,p_i\sum_{j\neq i}p_j\,\alpha_j.
\]

Substituting into \eqref{eq:df-dqi} gives
\begin{align*}
\frac{\partial f}{\partial \alpha_i}
&=
N\;
\frac{
2p_i\bigl(\sum_{j\neq i}p_j\alpha_j\bigr)\,Z
-
\bigl(\sum_{r\neq s}p_r\alpha_r\,p_s\alpha_s\bigr)\,p_i
}{
Z^2
}\\
&=
N\,p_i\;
\frac{2Z\bigl(Z - p_i\alpha_i\bigr) - \bigl(Z^2 - \sum_{j=1}^{k}(p_j\alpha_j)^2\bigr)
}{Z^2}.  \\
&= \frac{Np_i}{Z^2}(Z^2 - 2p_i\alpha_iZ + \sum_{j=1}^{k}(p_j\alpha_j)^2)
\end{align*}
Thus, to show that $\frac{\partial f}{\partial \alpha_i} \geq 0$, one needs to show $Z^2 - 2p_i\alpha_iZ + \sum_{j=1}^{k}(p_j\alpha_j)^2 \geq 0$. It can be shown by some simple algebraic manipulation. By defining $Z_{-i} = \sum_{j\neq i} p_j\alpha_j$, we have
\begin{align*}
    Z^2 - 2p_i\alpha_iZ + \sum_{j=1}^{k}(p_j\alpha_j)^2 &= (Z_{-i} + p_i\alpha_i)^2 - 2p_i\alpha_i(Z_{-i} + p_i\alpha_i) + \sum_{j=1}^{k}(p_j\alpha_j)^2 \\
    &= Z_{-i}^2 + 2p_i\alpha_iZ_{-i} + p_i^2\alpha_i^2 -2p_i\alpha_iZ_{-i}- 2p_i^2\alpha_i^2 + \sum_{j=1}^{k}(p_j\alpha_j)^2 \\
    &= Z_{-i}^2 + \sum_{j\neq i}(p_j\alpha_j)^2 \geq 0
\end{align*}

Hence \(f\) is coordinatewise non‐decreasing on \([0,1]^k\).  In particular, passing from \(\balpha=\mathbf1\) to any $\balpha\in[0,1]^k$ can only decrease $f$, which completes the proof of adaptive submodularity via \(\alpha\)–parameterization.
\qed

\section{Concentration Bounds for the Normalized Marginal Benefit}
\label{app:normalized-marginal-benefit}

This section establishes a concentration bound for the empirical normalized marginal benefit
\[
L(\hat{\bp}) = \sum_{i=1}^k \hat{p}_i (1 - \hat{p}_i),
\]
computed from \(M\) i.i.d.\ samples of a categorical distribution \(\bp \in \Delta_k\). The quantity \(L(\hat{\bp})\) is used in our algorithm as a heuristic to estimate the benefit of intervening on a given variable. We aim to bound the estimation error
\[
\left| L(\bp) - L(\hat{\bp}) \right|.
\]

Using the triangle inequality, we write:
\[
\left| L(\bp) - L(\hat{\bp}) \right| \leq \left| L(\bp) - \mathbb{E}[L(\hat{\bp})] \right| + \left| \mathbb{E}[L(\hat{\bp})] - L(\hat{\bp}) \right|.
\]
This gives rise to a bias term and a deviation term, which we control separately below.

\subsection{Bounding the Bias Term}
\begin{lemma}
\label{lemma: bias term}
Let \( \hat{\bp} \) denote the empirical distribution over \(M\) i.i.d.\ samples from a categorical distribution \( \bp \in \Delta_k \). Then the bias of the estimator satisfies
\[
\left| L(\bp) - \mathbb{E}[L(\hat{\bp})] \right| \leq \frac{L(\bp)}{M} \leq \frac{1}{M}.
\]
\end{lemma}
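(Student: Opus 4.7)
The plan is to compute $\mathbb{E}[L(\hat{\bp})]$ in closed form and show that the bias equals exactly $L(\bp)/M$, from which the lemma follows immediately. The key observation is that $M\hat{p}_i \sim \mathrm{Bin}(M, p_i)$ for each $i \in [k]$, so the mean and variance of $\hat{p}_i$ are directly available:
\[
\mathbb{E}[\hat{p}_i] = p_i, \qquad \mathrm{Var}(\hat{p}_i) = \frac{p_i(1-p_i)}{M}.
\]
Hence $\mathbb{E}[\hat{p}_i^2] = p_i^2 + \frac{p_i(1-p_i)}{M}$, and
\[
\mathbb{E}[\hat{p}_i(1-\hat{p}_i)] = p_i - p_i^2 - \frac{p_i(1-p_i)}{M} = p_i(1-p_i)\Bigl(1 - \frac{1}{M}\Bigr).
\]

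First, I would sum this identity over $i$ using linearity of expectation to get $\mathbb{E}[L(\hat{\bp})] = L(\bp)\bigl(1 - \tfrac{1}{M}\bigr)$. Rearranging gives exactly $L(\bp) - \mathbb{E}[L(\hat{\bp})] = L(\bp)/M$, so the bias is nonnegative and equal to $L(\bp)/M$, establishing the first inequality.

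For the second inequality, I would simply bound $L(\bp)$ uniformly. Since $L(\bp) = 1 - \sum_i p_i^2 \le 1$ for any $\bp \in \Delta_k$ (equivalently, $p_i(1-p_i) \le p_i$ and $\sum_i p_i = 1$), we obtain $L(\bp)/M \le 1/M$.

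There is no substantive obstacle here: the argument is a direct second-moment computation for binomial marginals, and the two inequalities follow mechanically. The only step worth double-checking is that the cross terms do not enter—this is avoided because $L(\hat{\bp})$ is a sum of functions of a single coordinate $\hat{p}_i$ each, so correlations between $\hat{p}_i$ and $\hat{p}_j$ (which are nontrivial, since $\sum_i \hat{p}_i = 1$) never appear. The simplicity of the expression is what makes the bound tight.
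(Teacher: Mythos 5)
Your proof is correct and follows essentially the same route as the paper: an exact computation showing $\mathbb{E}[L(\hat{\bp})] = \bigl(1 - \tfrac{1}{M}\bigr)L(\bp)$, hence bias exactly $L(\bp)/M$. The only cosmetic differences are that you package the second-moment computation via the binomial variance of $M\hat{p}_i$ rather than expanding the double sum of indicators, and you bound $L(\bp) \le 1$ directly from $L(\bp) = 1 - \sum_i p_i^2$ where the paper derives the sharper (but unneeded) bound $L(\bp) \le 1 - \tfrac{1}{k}$ via Cauchy--Schwarz.
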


\begin{proof}
\label{app:bias-control-proof}
We compute the bias directly:
\begin{align*}
    \mathbb{E}[L(\hat{\bp})] &= \mathbb{E}\left[\sum_{i=1}^k \hat{p}_i (1 - \hat{p}_i)\right] \\
    &= \sum_{i=1}^k \mathbb{E}[\hat{p}_i] - \sum_{i=1}^k \mathbb{E}[\hat{p}_i^2].
\end{align*}
Note that \( \mathbb{E}[\hat{p}_i] = p_i \), and:
\begin{align*}
    \mathbb{E}[\hat{p}_i^2] &= \mathbb{E}\left[\left(\frac{1}{M} \sum_{j=1}^M \mathbf{1}\{X_j = i\}\right)^2\right] \\
    &= \frac{1}{M^2}\sum_{j=1}^{M}\sum_{\ell=1}^{M} \mathbb{E}[\mathbf{1}\{X_j = i\} \cdot \mathbf{1}\{X_{\ell} = i\}] \\
    &= \frac{1}{M^2} \sum_{j=1}^M \mathbb{P}[X_j = i] + \frac{1}{M^2} \sum_{j \neq \ell} \mathbb{P}[X_j = i, X_{\ell}=i] \\
    &= \frac{1}{M^2}(M p_i + M(M-1) p_i^2) \\
    &= \frac{p_i}{M} + \left(1 - \frac{1}{M} \right) p_i^2.
\end{align*}
Therefore,
\begin{align*}
    \mathbb{E}[L(\hat{\bp})] &= \sum_{i=1}^k p_i - \sum_{i=1}^k \left( \frac{p_i}{M} + \left(1 - \frac{1}{M} \right) p_i^2 \right) \\
    &= \left(1 - \frac{1}{M}\right) \left(\sum_{i=1}^k p_i - \sum_{i=1}^k p_i^2\right)\\
    &= \left(1 - \frac{1}{M}\right)L(\bp),
\end{align*}
which yields
\begin{align*}
    |L(\bp) - \mathbb{E}[L(\hat{\bp})]| &= \frac{1}{M} L(\bp) \\
    &= \frac{1}{M}\sum_{i=1}^k p_i(1-p_i) \\
    &\overset{*}{\leq} \left(1 - \frac{1}{k}\right)\frac{1}{M} \\
    &\leq \frac{1}{M}.
\end{align*}

\paragraph{Justification for \(\ast\).} The bound follows from the identity
\[
\sum_{i=1}^k p_i(1 - p_i) = \sum_{i=1}^k p_i - \sum_{i=1}^k p_i^2 = 1 - \sum_{i=1}^k p_i^2,
\]
using the fact that \( \sum_{i=1}^k p_i = 1 \) for any \( p \in \Delta_k \). To maximize this expression, we minimize \( \sum_{i=1}^k p_i^2 \) under the simplex constraint. By the Cauchy-Schwarz inequality,
\[
\left( \sum_{i=1}^k p_i^2 \right)\left( \sum_{i=1}^k 1^2 \right) \ge \left( \sum_{i=1}^k p_i \right)^2 = 1^2,
\]
so \( \sum_{i=1}^k p_i^2 \ge \frac{1}{k} \), with equality if and only if \( p_i = \frac{1}{k} \) for all \( i \). Thus, the maximum of the original expression is
\(
1 - \frac{1}{k},
\)
attained uniquely by the uniform distribution.
\end{proof}

\subsection{Bounding the Deviation Term}
To bound the deviation term \( \left| \mathbb{E}[L(\hat{\bp})] - L(\hat{\bp}) \right| \), we invoke the bounded differences inequality. We begin with the following definition:

\begin{definition}
A function \( f: \mathcal{X}^M \rightarrow \mathbb{R} \) satisfies the bounded difference property with parameters \( c_1, \dots, c_M \) if changing the \(i\)-th coordinate of the input changes the function value by at most \(c_i\). That is, for all \( i \in [M] \) and any input vectors \( x_1, \dots, x_M \in \mathcal{X} \), and \( x_i' \in \mathcal{X} \),
\[
\left| f(x_1, \dots, x_i, \dots, x_M) - f(x_1, \dots, x_i', \dots, x_M) \right| \leq c_i.
\]
\end{definition}

A well-known result~\citep{vershynin2018high} establishes concentration of measure for bounded difference functions when their inputs are independent random variables.

\begin{theorem}[Bounded Differences Inequality, Theorem 2.9.1 in~\citep{vershynin2018high}]
\label{thm: concentration inequality for bounded difference functions}
Let \( X = (X_1, \dots, X_M) \) be a vector of independent random variables, and let \( f: \mathcal{X}^M \rightarrow \mathbb{R} \) satisfy the bounded difference property with parameters \( c_i \). Then for any \( t > 0 \),
\[
\mathbb{P}\left[ f(X) - \mathbb{E} f(X) \geq t \right] \leq \exp\left( -\frac{2 t^2}{\sum_{i=1}^M c_i^2} \right).
\]
\end{theorem}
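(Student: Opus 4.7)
The plan is to decompose the estimation error via the triangle inequality into a bias term and a deviation term, each of which I can control using results already established in the paper:
\begin{equation*}
|L(\hat{\bp}) - L(\bp)| \;\leq\; \underbrace{|L(\bp) - \mathbb{E}[L(\hat{\bp})]|}_{\text{bias}} \;+\; \underbrace{|\mathbb{E}[L(\hat{\bp})] - L(\hat{\bp})|}_{\text{deviation}}.
\end{equation*}
The bias term is bounded deterministically by $1/M$ via Lemma~\ref{lemma: bias term}, so the entire effort reduces to controlling the deviation term with a high-probability tail bound on a zero-mean random quantity.

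For the deviation term, I would apply the bounded differences inequality (Theorem~\ref{thm: concentration inequality for bounded difference functions}) to the function $f(X_1,\dots,X_M) := L(\hat{\bp})$ viewed as a function of the $M$ i.i.d.\ sample indices. The crucial technical step is to establish that $f$ enjoys the bounded differences property with constants $c_j = 2/M$ for every $j\in[M]$. Rewriting $L(\hat{\bp}) = 1 - \sum_{i=1}^k \hat{p}_i^2$ (using $\sum_i \hat{p}_i = 1$) and observing that changing $X_j$ from $a$ to $b$ only modifies two frequencies, $\hat{p}_a \mapsto \hat{p}_a - 1/M$ and $\hat{p}_b \mapsto \hat{p}_b + 1/M$, a direct expansion gives
\begin{equation*}
\Delta\!\left(\sum_i \hat{p}_i^2\right) \;=\; \frac{2(\hat{p}_b - \hat{p}_a)}{M} + \frac{2}{M^2}.
\end{equation*}
Using the natural bounds $\hat{p}_a \geq 1/M$ (since $X_j=a$ originally contributes) and $\hat{p}_b \leq 1 - 1/M$, one checks that the absolute change is at most $2/M - 2/M^2 \leq 2/M$, yielding $c_j = 2/M$ as desired.

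Then $\sum_{j=1}^M c_j^2 = 4/M$, and applying Theorem~\ref{thm: concentration inequality for bounded difference functions} to both $L(\hat{\bp})$ and $-L(\hat{\bp})$ followed by a union bound gives
\begin{equation*}
\mathbb{P}\!\left[\,|L(\hat{\bp}) - \mathbb{E}[L(\hat{\bp})]| \geq t\,\right] \;\leq\; 2\exp\!\left(-\tfrac{M t^2}{2}\right).
\end{equation*}
Setting the right-hand side equal to $\epsilon$ and solving yields $t = \sqrt{(2/M)\log(2/\epsilon)}$. Combining this with the bias bound via the triangle inequality above completes the proof of the concentration statement, and the corollary with $\epsilon = 1/M$ follows by direct substitution.

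The main obstacle is the verification of the bounded differences constant: the naive bound $|\hat{p}_b - \hat{p}_a|\leq 1$ would give $c_j = 2/M + 2/M^2$, whereas obtaining the clean constant $c_j = 2/M$ requires carefully exploiting the algebraic cancellation from the $+2/M^2$ term together with the feasibility constraints $\hat{p}_a \geq 1/M$ and $\hat{p}_b \leq 1 - 1/M$. Beyond this, the argument is routine: a triangle inequality, a black-box invocation of McDiarmid, and a two-sided union bound.
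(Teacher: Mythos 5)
Your proposal does not prove the statement at hand. The statement is the bounded differences (McDiarmid) inequality itself, Theorem~\ref{thm: concentration inequality for bounded difference functions}, and your argument invokes exactly this theorem as a black box (you say so yourself: ``a black-box invocation of McDiarmid''), which is circular with respect to the target. What you actually prove is the downstream estimation-error bound, Theorem~\ref{thm: bounding the estimation error}, together with the bounded-difference constant of Lemma~\ref{lemma: bounded difference property}. Those arguments are essentially correct --- and your sharpened verification of the constant, computing the change of $\sum_i \hat{p}_i^2$ as $2(\hat{p}_b - \hat{p}_a)/M + 2/M^2 \le 2/M - 2/M^2$ using $\hat{p}_a \ge 1/M$ and $\hat{p}_b \le 1 - 1/M$, is valid and slightly tighter than the paper's route through $|1-(\hat{p}_a+\hat{q}_a)|\le 1$ --- but they are proofs of \emph{other} statements in the paper, not of this one. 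For the record, the paper itself offers no proof either: it quotes the theorem verbatim from Vershynin's textbook (Theorem 2.9.1 there), so the correct ``blind proof'' would be a self-contained derivation of that classical result.

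A genuine proof would build the concentration from scratch via the martingale method: define the Doob martingale increments $D_i = \mathbb{E}[f(X)\mid X_1,\dots,X_i] - \mathbb{E}[f(X)\mid X_1,\dots,X_{i-1}]$, so that $f(X) - \mathbb{E} f(X) = \sum_{i=1}^M D_i$; use the independence of the $X_i$ together with the bounded-difference property to show that, conditionally on $X_1,\dots,X_{i-1}$, each $D_i$ takes values in an interval of length at most $c_i$; invoke Hoeffding's lemma to bound the conditional moment generating function, $\mathbb{E}\left[e^{\lambda D_i} \mid X_1,\dots,X_{i-1}\right] \le e^{\lambda^2 c_i^2 / 8}$; chain these bounds by the tower property and conclude with a Chernoff argument, optimizing $\lambda = 4t / \sum_{i=1}^M c_i^2$ to obtain the stated tail $\exp\bigl(-2t^2 / \sum_{i=1}^M c_i^2\bigr)$. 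None of this machinery --- the martingale decomposition, the conditional range argument, Hoeffding's lemma, or the exponential-moment/Chernoff optimization --- appears anywhere in your proposal, so as a proof of the stated theorem it has a complete gap rather than a repairable flaw.
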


\begin{lemma}
\label{lemma: bounded difference property}
Let \( \hat{\bp} \) denote the empirical distribution over \(M\) i.i.d.\ samples \( X_1, \dots, X_M \in [k] \), and define \(  L(X_1, \dots, X_M) = L(\hat{\bp}) = \sum_{i=1}^k \hat{p}_i(1 - \hat{p}_i) \). Then the function \( L(X_1, \dots, X_M): [k]^M \rightarrow \mathbb{R} \) satisfies the bounded difference property with parameters \( c_i = \frac{2}{M} \) for all \( i \in [M] \). Based on theorem~\ref{thm: concentration inequality for bounded difference functions}, it implies the following concentration inequality:
\[
\mathbb{P}\left[| L(\hat{\bp}) - \mathbb{E} [L(\hat{\bp})]| \geq t \right] \leq 2\exp\left( -\frac{M t^2}{2} \right).
\]
\end{lemma}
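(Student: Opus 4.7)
\textbf{Proposal for Lemma~\ref{lemma: bounded difference property}.} The plan is to quantify how $L(X_1,\dots,X_M)$ changes under a single-coordinate perturbation, then invoke the bounded differences inequality. Fix an index $j\in[M]$ and suppose we replace $X_j=a$ with $X_j'=b$, keeping all other samples intact. If $a=b$ then nothing changes, so assume $a\neq b$. Because each $\hat{p}_i$ counts the fraction of samples equal to $i$, the substitution alters exactly two coordinates of $\hat{\bp}$: $\hat{p}_a$ decreases by $1/M$ and $\hat{p}_b$ increases by $1/M$, while $\hat{p}_i$ is unchanged for $i\notin\{a,b\}$.

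Next I would exploit the one-dimensional Lipschitz property of $g(t)=t(1-t)$ on $[0,1]$. Since $g'(t)=1-2t$ satisfies $|g'(t)|\le 1$ throughout $[0,1]$, perturbing any $\hat{p}_i$ by at most $1/M$ changes $g(\hat{p}_i)$ by at most $1/M$. Writing $L(\hat{\bp})=\sum_{i=1}^k g(\hat{p}_i)$ and applying the triangle inequality to the two affected summands gives
\[
\bigl|L(X_1,\dots,X_j,\dots,X_M)-L(X_1,\dots,X_j',\dots,X_M)\bigr|\;\le\;\frac{1}{M}+\frac{1}{M}\;=\;\frac{2}{M}.
\]
Hence $L$ has the bounded difference property with $c_j=2/M$ for each $j\in[M]$.

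Finally, I would plug this into Theorem~\ref{thm: concentration inequality for bounded difference functions}. Since $X_1,\dots,X_M$ are i.i.d.\ (hence independent) and $\sum_{j=1}^M c_j^2=M\cdot 4/M^2=4/M$, the one-sided bound gives $\mathbb{P}[L(\hat{\bp})-\mathbb{E}[L(\hat{\bp})]\ge t]\le\exp(-Mt^2/2)$. Applying the same argument to $-L$ (which shares the same bounded-difference constants) and taking a union bound yields the two-sided version
\[
\mathbb{P}\bigl[\,|L(\hat{\bp})-\mathbb{E}[L(\hat{\bp})]|\ge t\,\bigr]\;\le\;2\exp\!\left(-\frac{Mt^2}{2}\right),
\]
as claimed. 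I do not anticipate any real obstacle in this argument; the only substantive step is the Lipschitz estimate for $t(1-t)$, which is elementary, and the slightly subtle bookkeeping that a single sample swap perturbs exactly two empirical probabilities (rather than all $k$ of them), which would otherwise produce a looser constant.
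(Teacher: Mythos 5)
Your proposal is correct and follows essentially the same route as the paper: a single sample swap perturbs exactly the two empirical entries $\hat{p}_a$ and $\hat{p}_b$ by $1/M$ each, and each summand of $L$ changes by at most $1/M$ (your Lipschitz bound $|g'(t)|=|1-2t|\le 1$ is just the derivative form of the paper's algebraic factorization $|p(1-p)-q(1-q)|=|p-q|\,|1-(p+q)|\le|p-q|$), giving $c_j = 2/M$ and the stated concentration bound. Your explicit remark that the two-sided inequality comes from applying the one-sided bound to both $L$ and $-L$ with a union bound is a small point the paper leaves implicit, but it is standard and does not constitute a different approach.
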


\begin{proof}
\label{app:bounded-difference-proof}    
We prove the bounded difference property with respect to the first argument; the proofs for the remaining coordinates follow analogously. 

Consider two datasets that differ only in their first element:
\[
\mathbf{X} = (X_1 = a, X_2, X_3, \dots, X_N) \quad \text{and} \quad \mathbf{X}' = (X_1' = b, X_2, X_3, \dots, X_N),
\]
where \( a \neq b \). Let \( \hat{\bp} \) denote the empirical distribution based on \( \mathbf{X} \), and \( \hat{\bq} \) the empirical distribution based on \( \mathbf{X}' \). Since the two datasets differ only at position \(1\), the empirical distributions \( \hat{\bp} \) and \( \hat{\bq} \) will differ only in the entries corresponding to \(a\) and \(b\). We can therefore compute the difference in the function values based on the differences in these two entries as follows:
\begin{align*}
    |L(\mathbf{X})-L(\mathbf{X}')| &= |L(\hat{\bp})-L(\hat{\bq})| \\
    &= |\hat{p}_a(1-\hat{p}_a) + \hat{p}_b(1-\hat{p}_b) - \left(\hat{q}_a(1-\hat{q}_a) + \hat{q}_b(1-\hat{q}_b)\right)| \\
    &\leq |\hat{p}_a(1-\hat{p}_a) - \hat{q}_a(1-\hat{q}_a)| + |\hat{p}_b(1-\hat{p}_b) - \hat{q}_b(1-\hat{q}_b)| \\
    &= |\hat{p}_a - \hat{q}_a - (\hat{p}_a^2 - \hat{q}_a^2)| + |\hat{p}_b - \hat{q}_b - (\hat{p}_b^2 - \hat{q}_b^2)|\\
    &\leq |\hat{p}_a - \hat{q}_a|.\underbrace{|1-(\hat{p}_a+\hat{q}_a)|}_{\leq 1} + |\hat{p}_b - \hat{q}_b|.\underbrace{|1-(\hat{p}_b+\hat{q}_b)|}_{\leq 1} \\
    &\overset{(1)}{\leq} \underbrace{|\hat{p}_a - \hat{q}_a|}_{\leq \frac{1}{M}} + \underbrace{|\hat{p}_b - \hat{q}_b|}_{\leq \frac{1}{M}}\\
    &\overset{(2)}{\leq} \frac{2}{M}
\end{align*}

Since \(\hat{p}_a, \hat{q}_a \in [0,1]\), it follows that \(|1 - (\hat{p}_a + \hat{q}_a)| \leq 1\). The same argument applies to the term involving \(b\). This yields inequality \((1)\). 

Inequality \((2)\) uses the fact that changing one data point in the dataset affects at most two entries in the empirical distribution: one entry increases by \(1/M\), and another decreases by \(1/M\). Therefore,
\[
|\hat{p}_a - \hat{q}_a| \leq \frac{1}{M}, \qquad |\hat{p}_b - \hat{q}_b| \leq \frac{1}{M}.
\]
\end{proof}

\subsection{Proof of Theorem \ref{thm: bounding the estimation error}}
\label{app:bounding-the-estimation-error}    
To bound the estimation, we can combine the previous results as follows

\begin{align*}
    \mathbb{P}[|L(\bp)-L(\hat{\bp})| \geq t] &\leq \mathbb{P}[|L(\hat{\bp})-\mathbb{E}[L(\hat{\bp})]| + \underbrace{|L(\bp) - \mathbb{E}[L(\hat{\bp})]|}_{\overset{\mathrm{lemma}~\ref{lemma: bias term}}{\leq} \frac{1}{M}} \geq t] \\
    & \leq \mathbb{P}[|L(\hat{\bp})-\mathbb{E}[L(\hat{\bp})]| \geq t-\frac{1}{M}] \\
    & \overset{\mathrm{lemma}~\ref{lemma: bounded difference property}}{\leq}  2\exp\left( -\frac{M}{2} \left(t-\frac{1}{M}\right)^2\right)
\end{align*}

Setting \(t = \frac{1}{M} + \sqrt{\frac{2}{M}\log(\frac{2}{\epsilon})}\) implies

\begin{align}
    \mathbb{P}\left[|L(\bp)-L(\hat{\bp})| \geq \frac{1}{M} + \sqrt{\frac{2}{M}\log(\frac{2}{\epsilon})}\right] \leq \epsilon.
\end{align}
\qed

\section{Implementation and Experimental Details}
\label{app:implementation-details}

\subsection{Pseudocode for the Idealized Setting}
The pseudocode for our adaptive intervention selection strategy, assuming an ideal ICA oracle, is shown in Algorithm~\ref{alg:greedy-intervention}. The method begins by extracting an observational estimate of the causal matrix, which is assumed to be a perfectly permuted and scaled version of the true matrix. It then builds a bipartite graph representing the equivalence class of candidate graphs. At each iteration, the algorithm greedily selects the intervention expected to eliminate the largest number of inconsistent graphs by sampling perfect matchings. Once an intervention is performed, the corresponding matching edge is identified, and the process continues until the causal graph is uniquely identified or the budget is exhausted.

\begin{algorithm}[h]
\caption{Adaptive Experiment Design for Cyclic LSCMs (Idealized Setting)}
\label{alg:greedy-intervention}
\begin{algorithmic}[1]
\State \textbf{Input:} Observational distribution, budget \( K \)
\State \textbf{Output:} Reduced equivalence class after \( K \) adaptive interventions

\State Run \textbf{ideal ICA} on observational data to estimate \( I - W_{\mathrm{ICA}} \)
\State Construct the bipartite graph from nonzero entries of \( I - W_{\mathrm{ICA}} \)

\For{$t = 1$ to $K$}
    \State Check if the bipartite graph admits a \textbf{unique perfect matching}; if so, \textbf{terminate}
    \State Sample \( M_t \) perfect matchings from the bipartite graph using a sampler oracle
    \State Estimate the normalized marginal benefit \( \hat{B}(j) = \sum_{i} \hat{p}_{ij}(1 - \hat{p}_{ij}) \) for each variable \( j \)
    \State Select variable \( j^\star \) with maximum estimated benefit and intervene on it
    \State Run \textbf{ideal ICA} on the interventional distribution to estimate \( I - W_{\mathrm{ICA}}^{(j^\star)} \)
    \State Recover the true matching edge \( (r_{\pi^{-1}(j^\star)}, j^\star) \) by comparing pre- and post-intervention estimates.
    \State Remove \( r_{\pi^{-1}(j^\star)} \), \( j^\star \), and their connecting edge from the bipartite graph
\EndFor
\end{algorithmic}
\end{algorithm}

\subsection{Experimental Details for the Idealized Setting}
\label{app:implementation-details-idealized setting}
This section provides additional implementation details for the simulations presented under the ideal ICA assumption in the main text. The intervention sampler, averaging methodology, and results on dense graphs are described below.

\paragraph{Intervention Sampler.}
To estimate the marginal benefit of each candidate intervention, we rely on random samples of perfect matchings in the bipartite graph. When exact enumeration is computationally infeasible, we use a greedy heuristic to generate random (but not uniform) perfect matchings. Starting from an empty matching, the sampler proceeds by iteratively selecting an unmatched right-node \( v \) that has the minimum degree, choosing an adjacent unmatched left-node \( u \) uniformly at random, and adding the pair \( (u, v) \) to the matching. Then we remove \( v \), \( u \), and all their incident edges from the bipartite graph. This process continues until a perfect matching is formed. One can show that this method always finds a random perfect matching if one exists. While this sampling scheme is non-uniform, it is scalable and yields accurate marginal benefit estimates in practice.

\paragraph{Averaging Across Trials.}
For each fixed number of nodes \( n \), we repeat the simulation over 60 random graph instances to mitigate randomness in graph generation and intervention ordering. We report the \emph{average number of interventions} required for full identification and visualize variability using \emph{standard deviation bars} around the mean in our plots in Figure~\ref{fig:strategy-comparison-appendix}. These error bars demonstrate the stability of each strategy across different random seeds.

\paragraph{Experiments on Dense Graphs.}
In the main experiments, we used Erdős–Rényi random graphs with edge probability \( p = c/n \), ensuring that the expected degree remains constant as \( n \) increases. In this regime, the feedback vertex set (FVS) can be computed exactly, and serves as a theoretical lower bound. To further stress-test the methods, we also evaluate performance on \emph{dense} graphs where each edge is included independently with a fixed probability (e.g., \( p = 0.2 \)). These dense graphs often contain an exponential number of cycles, making exact FVS computation infeasible.

Nonetheless, we compare the performance of our adaptive method against Random and Max-Degree baselines in these settings. The results, shown in Figure~\ref{fig:comparison-sample-dense-graph}, reveal that our algorithm continues to outperform these heuristics even in dense regimes where the graph structure is significantly more complex. For full reproducibility, including code and data used in all experiments, we refer to the repository available at \url{https://github.com/EhsanSharifian/exp-design-cyclic-models}.

\begin{figure}[t!]
    \centering
    \begin{subfigure}[t]{\linewidth}
        \centering
        \includegraphics[width=0.85\linewidth]{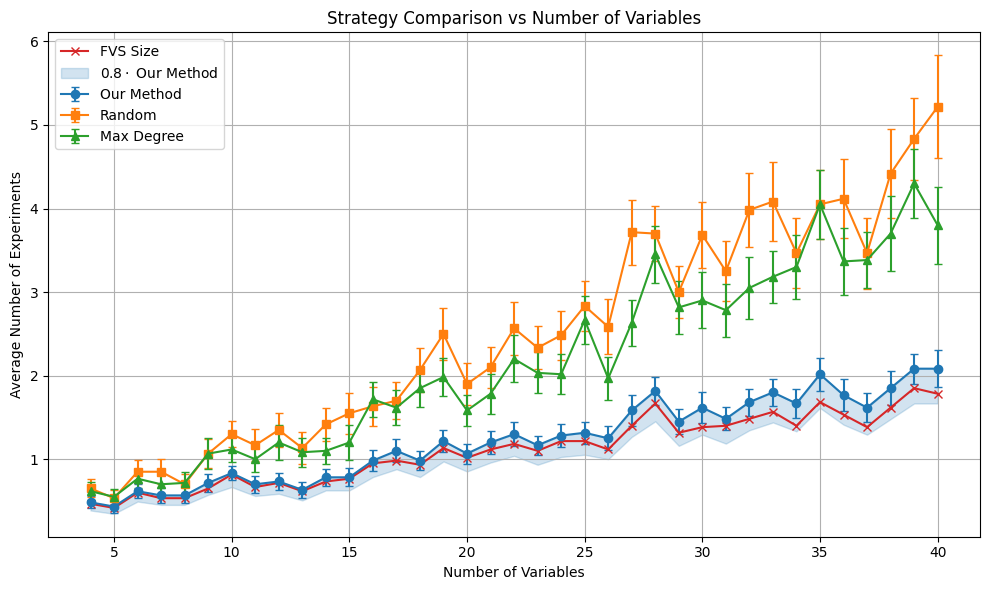}
        \caption{Sample mode on sparse graphs with a maximum of 40 nodes.}
        \label{fig:comparison-sample-error-bars}
    \end{subfigure}
    \vspace{1em} 
    \begin{subfigure}[t]{\linewidth}
        \centering
        \includegraphics[width=0.85\linewidth]{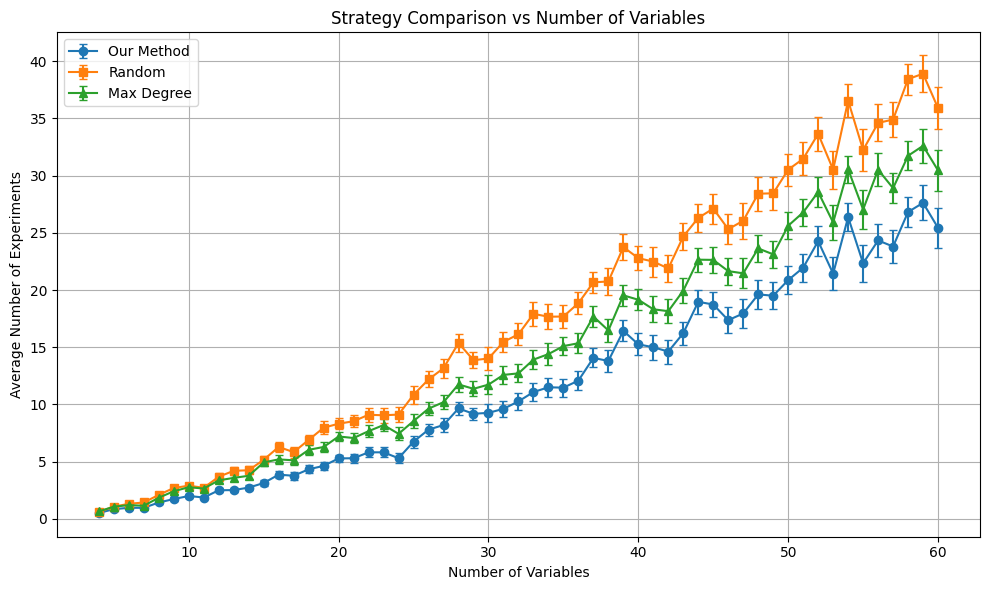}
        \caption{Sample mode on dense graphs with a maximum of 60 nodes.}
        \label{fig:comparison-sample-dense-graph}
    \end{subfigure}
    \caption{Comparison of intervention strategies under the ideal ICA assumption with a sample-based matching sampler. Error bars indicate standard deviation over trials.}
    \label{fig:strategy-comparison-appendix}
\end{figure}

\subsection{Implementation in the Finite-Sample Setting}
\label{app:Implementation-with-Finite-Sample}
This section provides the implementation details for the finite-sample setting, where we use the \texttt{FastICA} algorithm on finite samples instead of having access to an ideal oracle. This introduces estimation noise, necessitating a more robust algorithm, presented in Algorithm~\ref{alg:greedy-ica}.

\begin{algorithm}[h!]
\caption{ICA-based Adaptive Experiment Design for Cyclic LSCMs}
\label{alg:greedy-ica}
\begin{algorithmic}[1]
\State \textbf{Input:} Observational data, intervention budget \( K \)
\State \textbf{Output:} Recovered causal matrix $\widehat{W}$

\State Run \texttt{FastICA} on the observational data to estimate \( I - W_{\mathrm{ICA}} \)
\State Normalize and threshold \( I - W_{\mathrm{ICA}} \) adaptively to ensure matchability
\State Construct a bipartite graph from the support of the thresholded matrix

\For{$t = 1$ to $K$}
    \State Check for a \textbf{unique perfect matching}; if found, \textbf{terminate}
    \State Sample \( M_t \) perfect matchings from the bipartite graph
    \State Compute normalized marginal benefit \( \hat{B}(j) = \sum_{i} \hat{p}_{ij}(1 - \hat{p}_{ij}) \) for each variable \( j \)
    \State Select the variable \( j^\star \) with maximum estimated benefit and intervene on it
    \State Run \texttt{FastICA} on the interventional distribution to estimate \( I - W_{\mathrm{ICA}}^{(j^\star)} \)
    \State Rank candidate matching edges \( (r_i, j^\star) \) by comparing rows of pre- and post-intervention matrices
    \For{each candidate edge \( (r_i, j^\star) \) in ranked order}
        \State Temporarily remove \( r_i \) and \( j^\star \) from the bipartite graph
        \If{the remaining graph admits a perfect matching}
            \State Accept \( (r_i, j^\star) \) as the matching edge
            \State Remove \( r_i \), \( j^\star \), and their connecting edge from the actual graph
            \State \textbf{Break}
        \EndIf
    \EndFor
    \If{no valid edge found}
        \State \textbf{Terminate} with failure
    \EndIf
\EndFor
\end{algorithmic}
\end{algorithm}

The algorithm incorporates two key modifications to handle noise. First, after running ICA, it performs \textit{adaptive thresholding} (Line~4): it begins with strict thresholds to remove noise and progressively relaxes them only until the resulting bipartite graph admits at least one perfect matching. Second, after an intervention, it uses a \textit{safe matching} procedure to identify the correct row (Lines~13-20). Instead of picking the single best match directly, it ranks candidate rows by similarity and iterates through them. It accepts the first candidate that, when matched, leaves the remaining graph in a state where a perfect matching is still possible. This conservative strategy ensures robustness against estimation errors.

\begin{figure}[t!]
    \centering
    \includegraphics[width=0.85\linewidth]{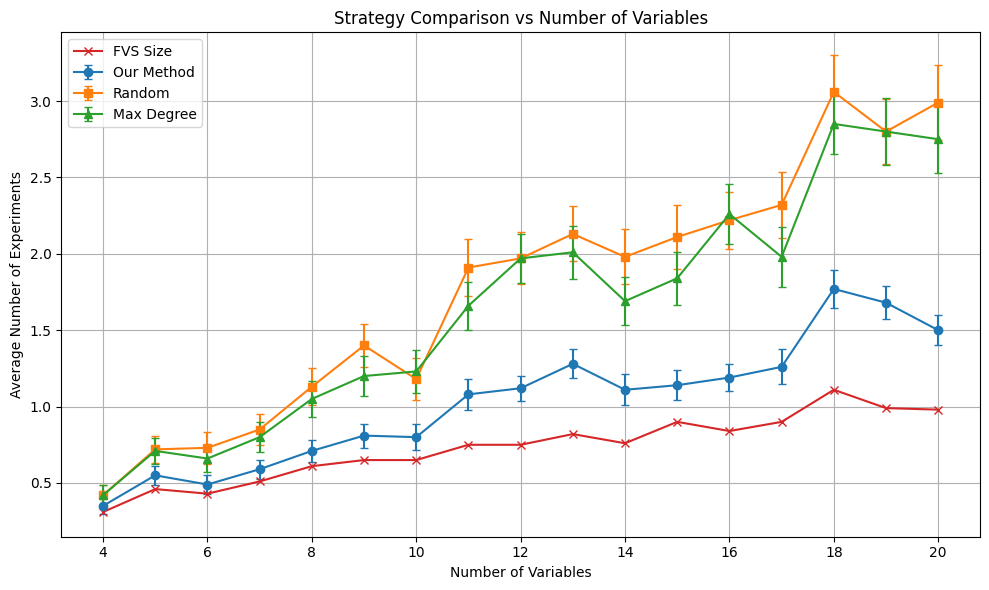}
    \caption{Comparison of intervention strategies under finite-sample ICA with a sample-based matching sampler. Error bars represent standard deviation over trials.}
    \label{fig:strategy-comparison-FastICA}
\end{figure}

The performance of this robust algorithm is detailed in Figure~\ref{fig:strategy-comparison-FastICA}. The results show that our adaptive strategy continues to outperform the Random and Max Degree baselines, demonstrating its practical effectiveness. Furthermore, Table~\ref{tab:relative-errors-appendix} quantifies the accuracy of the final recovered matrix $\widehat{W}$ against the ground truth $W^*$, demonstrating high-fidelity recovery.

\begin{table}[h!]
\centering
\caption{Distribution of relative errors ($\varepsilon_{\mathrm{rel}}=\frac{\left\|\widehat{W}-W^*\right\|_F}{\left\|W^*\right\|_F}$) for the recovered matrix $\widehat{W}$ across all experimental runs in the finite-sample setting.}
\label{tab:relative-errors-appendix}
\begin{tabular}{@{}lc@{}}
\toprule
\textbf{Relative Error Range} & \textbf{Number of Instances} \\ \midrule
$< 0.05$            & 860 \\
$0.05$ -- $0.15$    & 275  \\
$0.15$ -- $0.25$    & 95  \\
$> 0.25$            & 470 \\ \bottomrule
\end{tabular}
\end{table}

\section{Details on Extensions}
\label{app:extensions}

\subsection{Multi-Node Interventions}
\label{app:extensions-multi-node}
The adaptive framework presented in the main text focuses on single-node interventions but can be naturally extended to handle simultaneous interventions on multiple nodes. This extension allows for the parallelization of experiments, which can significantly improve the efficiency of the causal discovery process. Here, we provide a detailed analysis of this generalization.

\paragraph{Formalizing the Multi-Node Intervention.}
Suppose an experiment is conducted where hard interventions are simultaneously applied to a set of $t$ variables, denoted by $E = \{i_1, i_2, \dots, i_t\}$. A hard intervention on a variable effectively removes all its incoming causal links. The post-intervention weight matrix, $W^{(E)}$, is therefore obtained by setting the rows corresponding to the indices in $E$ to zero. Formally, for a row index $k$:
\[
(W^{(E)})_{k, \cdot} = 
\begin{cases}
    \mathbf{0} & \text{if } k \in E \\
    W_{k, \cdot} & \text{if } k \notin E
\end{cases}
\]
Applying ICA to the observational and interventional data yields the permuted and scaled matrices $P D (I - W)$ and $P' D' (I - W^{(E)})$, respectively. The core idea is to identify structural information by comparing the set of rows from these two matrices.

\paragraph{Localizing the Permutation Ambiguity.}
The key difference between $(I - W)$ and $(I - W^{(E)})$ lies precisely in the $t$ rows corresponding to the interventions. When we compare the set of $n$ rows recovered from the observational data with the $n$ rows from the interventional data, we will find that:
\begin{itemize}
    \item $n-t$ rows from the observational set have a corresponding match in the interventional set (up to scaling). These are the rows associated with the non-intervened variables $V \setminus E$.
    \item $t$ rows from the observational set have no match. These "unmatched" rows must correspond to the variables in the intervened set $E$.
\end{itemize}
This single multi-node intervention effectively partitions the global row-permutation problem. It breaks the cycle reversion ambiguity that might exist between the nodes in $E$ and the nodes in $V \setminus E$. We now know with certainty that the $t$ unmatched observational rows must be assigned to the $t$ true row indices corresponding to the variables in $E$. 

The global challenge of finding one permutation of size $n$ is thus reduced to two smaller, independent subproblems: one for the set $E$ and one for the set $V \setminus E$. The specific assignment of the $t$ identified rows to the $t$ variables within $E$ remains to be solved, a subproblem that is addressed by the acyclicity condition discussed next.

\paragraph{Condition for Full Identification within the Intervened Set.}
While a multi-node intervention is always informative for localizing ambiguity, we can achieve an even stronger result (full identification of all intervened nodes) if the intervention set $E$ is chosen carefully. Specifically, if the subgraph induced by the variables in $E$ is \textit{acyclic}, the internal permutation ambiguity within $E$ can be fully resolved.

A crucial subtlety is that we cannot verify the acyclicity of an \textit{arbitrary} set of variables directly from the observational matrix $I - W_{\mathrm{ICA}}$, due to the row-permutation ambiguity we aim to solve. However, a \textbf{verifiable sufficient condition} can be considered herein. The \textbf{Strongly Connected Components (SCCs)} of the causal graph are identifiable from the observational data, as the component structure is invariant across all graphs in the permutation-equivalence class.

Therefore, a verifiable strategy is to select an intervention set $E$ such that each variable $i_k \in E$ belongs to a different SCC. By definition, nodes residing in distinct SCCs cannot form a cycle among themselves. This choice guarantees that the subgraph induced by $E$ is acyclic.

Consequently, when we intervene on such a set and identify the $t$ unmatched observational rows, there will be only one way to assign these rows to the variables in $E$ that is consistent with an acyclic structure. This uniquely reveals the true row for each intervened variable and fully resolves the local ambiguity.

\paragraph{Implications for Experimental Design.}
This extension enables a more efficient, parallelized experimental design strategy. Instead of selecting one node at a time, the algorithm could identify maximal subsets of variables that belong to different SCCs, and intervene on them simultaneously. This would remarkably reduce the number of experiments required for full graph identification.

\subsection{Imperfect and Noisy Interventions}
\label{app:extensions-imperfect}
The core framework described in this paper assumes ``perfect'' or ``hard'' interventions, where all incoming causal mechanisms for a target variable are removed (i.e., the corresponding row in the weight matrix $W$ is set to zero). However, the method's validity is not restricted to this idealized case. It can be extended to a broad class of imperfect or ``soft'' interventions, which are often more representative of real-world experiments.

\paragraph{The General Identifiability Condition.}
The fundamental mechanism of our approach is the identification of the intervened variable by detecting a change in its corresponding structural equation. This is achieved by comparing the set of rows recovered from observational data, $R_{obs} = \{\text{rows of } (I-W)\}$, with the set of rows recovered from interventional data, $R_{int}$. An intervention on a variable $X_k$ is considered informative as long as it sufficiently perturbs the $k$-th row of the causal matrix.

Let the imperfect intervention on $X_k$ change its corresponding row in the weight matrix from $W_{k, \cdot}$ to a new row $W'_{k, \cdot}$. The new post-intervention matrix is $(I - W')$. The intervention is guaranteed to be identifiable if the new, perturbed row $(I - W')_{k, \cdot}$ is not a scaled version of any of the \textit{other} original rows from the observational matrix. That is, for all $j \neq k$:
$$
(I - W')_{k, \cdot} \neq c \cdot (I - W)_{j, \cdot} \quad \text{for any scaling factor } c.
$$
If this condition holds, the original row $(I - W)_{k, \cdot}$ will be the only row in $R_{obs}$ that has no corresponding match in $R_{int}$, uniquely revealing $k$ as the target of the intervention.

This identifiability condition holds with high probability for interventions that alter the causal mechanism of a variable. The chances of a random modification creating a row that perfectly aligns with an existing one are negligible. 

\subsection{Generalization to Non-linear Models}
\label{app:extensions-non-linear}
While this work focuses on linear models for their tractability and widespread use, the core principles of our adaptive design framework (representing ambiguity via a bipartite graph and greedily selecting interventions to reduce it) can be extended to non-linear Structural Causal Models (SCMs). This generalization relies on recent advances in non-linear Independent Component Analysis (ICA) and the structural information revealed by the Jacobian of the causal mechanism.

\paragraph{Causal Discovery via the Jacobian.}
Consider a general non-linear SCM defined by the structural equations $X_i = f_i(\text{Pa}_i, N_i)$, where $\text{Pa}_i$ are the parents of node $i$ and $N_i$ are independent noise terms. Let $\mathbf{f}$ be the vector-valued function that maps the exogenous noises $\mathbf{N}$ to the observed variables $\mathbf{X}$. The inverse of this map, $\mathbf{f}^{-1}$, recovers the noises from the variables, such that $\mathbf{N} = \mathbf{f}^{-1}(\mathbf{X})$.

This formulation is a direct generalization of the linear model. In our linear case, the relationship is $\mathbf{N} = (I-W)\mathbf{X}$, meaning the matrix $(I-W)$ itself is the (linear) inverse map. For the non-linear case, a key result from \cite{reizinger2023jacobian} for acyclic graphs shows that the \textit{Jacobian of the inverse map}, $\nabla \mathbf{f}^{-1}(\mathbf{X})$, plays the analogous role. Specifically, its support (the pattern of non-zero entries) is equivalent to the support of $I-A$, where $A$ is the graph's adjacency matrix. We conjecture that this result can be extended to cyclic models under similar identifiability conditions.

\paragraph{Role of Non-linear ICA.}
The theoretical connection between the Jacobian and the graph structure can be operationalized using modern non-linear ICA algorithms \cite{khemakhem2020variational, kivva2022identifiability}. These methods can learn the inverse map $\mathbf{f}^{-1}$ from observational data alone, up to known indeterminacies (such as permutation, scaling, and element-wise monotonic transformations). By computing the Jacobian of the learned function $\hat{\mathbf{f}}^{-1}$, we can obtain an estimate of the matrix $I-A$ up to a row permutation and scaling. Let us denote this recovered matrix by $(I - A)_{\text{ICA}}$.

\paragraph{Adapting the Experimental Design Framework.}
The recovered matrix $(I-A)_{\text{ICA}}$ serves the exact same purpose in the non-linear setting as $(I-W_{\text{ICA}})$ does in our linear framework. Its support can be used to construct a bipartite graph that perfectly characterizes the permutation ambiguity of the observational equivalence class. From this representation, we can identify the SCCs and apply the same adaptive intervention strategy to resolve the ambiguity. An intervention on a node $X_k$ would modify its function $f_k$, which in turn would alter the $k$-th row of the Jacobian, allowing for its identification.

\paragraph{Current Limitation.}
The primary challenge for this non-linear extension lies in the identifiability constraints. The main structural requirement is that \textit{each node in the causal graph must have a unique parent set} to ensure the rows of the Jacobian have unique sparsity patterns to enable
correct matching in the bipartite graph. Consequently, based on current results in nonlinear ICA, it is
needed that each node has a unique parent set in order to carry out the experiment design step. We hope that this assumption can be relaxed with future advances in nonlinear ICA.

\end{document}